\newcommand*\diff{\mathop{}\!\mathrm{d}}
\newcommand*{\email}[1]{
    \normalsize\href{mailto:#1}{\texttt{#1}}
    }
\newtheorem{theorem}{Theorem}
\newtheorem{remark}{Remark}
\newtheorem{lemma}{Lemma}
\newtheorem{definition}{Definition}
\newtheorem{corollary}{Corollary}
\newtheorem{proposition}{Proposition}
\newtheorem{example}{Example}
\begin{document}
\title{Breaking Fair Binary Classification with Optimal Flipping Attacks}
\date{University of Wisconsin-Madison}
\author{Changhun Jo \\ \email{cjo4@wisc.edu} \and Jy-yong Sohn \\ \email{sohn9@wisc.edu} \and Kangwook Lee \\ \email{kangwook.lee@wisc.edu}}

\maketitle

\begin{abstract}
Minimizing risk with fairness constraints is one of the popular approaches to learning a fair classifier.
Recent works showed that this approach yields an unfair classifier if the training set is corrupted.
In this work, we study the minimum amount of data corruption required for a successful flipping attack.
First, we find lower/upper bounds on this quantity and show that these bounds are tight when the target model is the unique unconstrained risk minimizer.
Second, we propose a computationally efficient data poisoning attack algorithm that can compromise the performance of fair learning algorithms.
\end{abstract}

\section{Introduction}\label{sec:1}
Fairness and robustness are two main requirements for trustworthy artificial intelligence (AI).
According to the fairness principle in~\cite{EC2019Ethics}, AI systems should ensure that individuals and groups are free from unfair bias and discrimination.
In recent years, researchers have proposed various definitions for fair classification~\cite{Feldman_2015,Hardt_2016} and algorithms for learning fair models~\cite{Kamiran2012Preprocessing,Zemel2013FairLearning,Calmon2017Preprocessing,Grover2020FairGM,Jiang2020Identifying,Zafar2017Mistreatment,Zafar2017FC,Roh2020FR-Train,Roh2020FairBatch,Hardt_2016,Abernethy2020Adaptive}.
One popular approach is to solve risk minimization with constraints that capture the desired fairness definition.

While several works theoretically analyzed the risk minimization with fairness constraints~\cite{Agarwal2018Reduction,Donini_2018}, our understanding of its performance on noisy or corrupted data is scarce.
Given that the use of web-scale training data, crawled from the Internet and/or crowdsourced, has become an essential part of machine learning pipeline~\cite{deng2009imagenet,devlin2018bert,brown2020language}, it is of utmost importance to understand how one can learn fair models on data that is potentially corrupted by random or adversarial noise.
To understand the robustness of risk minimization with fairness constraints,~\cite{Chang2020Adversarial} studied the worst-case scenario -- called \emph{data poisoning attacks} -- where adversaries can modify training data to make the model learned on it becomes unusable (either due to low accuracy or bias).
They designed an online gradient descent algorithm, which adds poisoned samples to the training set iteratively.
Their experimental results showed that constrained risk minimization is so unstable under their attack that the models learned by this approach might be even more unfair than the models learned by unconstrained risk minimization.
However, the optimality of the proposed attack algorithm was unknown.

\begin{figure}[t]
\centering
\includegraphics[width=0.5\columnwidth]{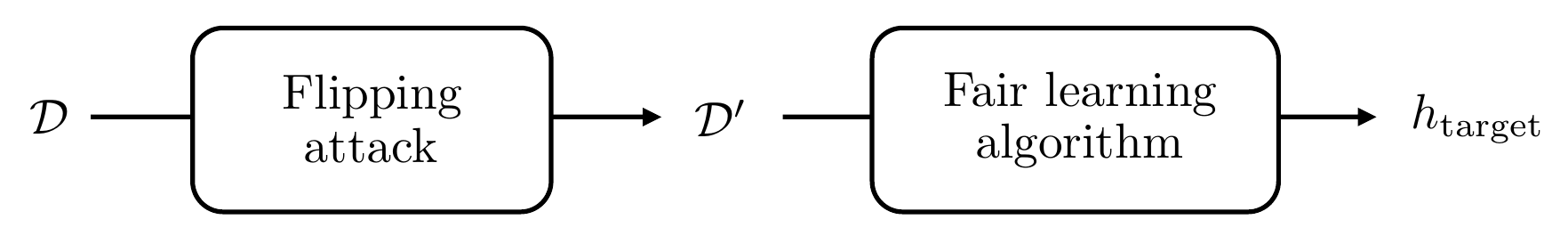}
\caption{A visualization of our framework. Given a target model $h_{\text{target}}$, the attacker wants to find the poisoned distribution $\mathcal{D}'$ via flipping attacks (see Sec.~\ref{sec:3} for a rigorous definition of flipping attacks) on the clean distribution $\mathcal{D}$ so that the fair learning algorithm outputs $h_{\text{target}}$.
Among such flipping attacks, the attacker's goal is to find the optimal flipping attack that minimizes the total variation distance between $\mathcal{D}$ and $\mathcal{D}'$.
See Sec.~\ref{sec:3} for more details.
}\label{fig:1}
\end{figure}

In this work, we study the problem of developing the optimal flipping attack algorithm against risk minimization with fairness constraints.
In particular, we consider a problem setup where an attacker manipulates the data distribution $\mathcal{D}$ such that the model learned on the poisoned data distribution $\mathcal{D}'$ becomes a given target model $h_{\text{target}}$ (see Fig.~\ref{fig:1} for a visualization of our framework).
By formulating this attack problem as a bilevel optimization problem, we provide lower and upper bounds on the minimum amount of data perturbation required for a successful flipping attack.
Furthermore, if the target model is the unique unconstrained risk minimizer (which generally is \emph{unfair}), then our bounds are tight, and our upper bound provides an explicit construction of the optimal flipping attack algorithm.
In other words, when the attacker's goal is to counteract the fairness constraints, our attack algorithm can achieve the goal by perturbing the minimum amount of data.
As a byproduct of our analysis, we also show that, under mild assumptions, there exist infinitely many non-trivial fair models that do not suffer from disparate treatment~\cite{barocas2016big}, which can be of independent theoretical interest.

\section{Related Work}\label{sec:2}
\subsection{Learning Fair Classifiers}
Various metrics have been proposed to measure the fairness of a classification model such as demographic parity~\cite{Feldman_2015}, equalized odds~\cite{Hardt_2016}, and equal opportunity~\cite{Hardt_2016}.
Many methods have been proposed to learn fair classifiers, and they can be grouped in four categories: (1) pre-processing methods~\cite{Kamiran2012Preprocessing,Zemel2013FairLearning,Calmon2017Preprocessing,Grover2020FairGM,Jiang2020Identifying} that preprocess or reweight training data, (2) in-processing methods~\cite{Kamishima2012Fairness,Zafar2017Mistreatment,Zafar2017FC,Yao2017FMC,Agarwal2018Reduction,Zhang2018Mitigating,Cotter2019TwoPlayer,Roh2020FR-Train} that enforce fairness constraints or regularizers during the training period, (3) post-processing methods~\cite{Kamiran2012Decision,Hardt_2016,Pleiss2017FairCalibration,Chzhen2019Leveraging} that manipulate trained models, and (4) adaptive batch selection methods~\cite{Roh2020FairBatch,roh2021sample}.
Several works have also studied fair classification with missing/noisy sensitive attributes~\cite{lamy2019noise,awasthi2020imperfect,Wang2020Robust,mehrotra2021noisy,celis2021noisy,jung2022partial}.

One prominent approach to learning fair classifiers is Fair Empirical Risk Minimization (FERM), an in-processing method, that solves empirical risk minimization with constraints that capture the desired fairness notion.
As fairness constraints are generally non-convex, various relaxations and approximate algorithms have been proposed~\cite{Zafar2017FC,Donini_2018}.
While these algorithms are shown to successfully learn fair classifiers, the robustness to adversarial attack is not fully understood yet.

\subsection{Data Poisoning Attacks and Defenses}
Data poisoning attacks poison the training set to achieve the adversary's goal~\cite{dalvi2004adversarial,Lowd2005GoodWA}, and there are two popular approaches; objective-driven attacks and model-targeted attacks.
The goal of objective-driven attacks~\cite{Biggio2012Poisoning,Xiao2012AdversarialLF,Mei2015Attack,Koh2018Stronger,Chang2020Adversarial,Solans2020Poisoning} is to make the learner output a model satisfying a target property, \emph{e.g.,} low accuracy.
The goal of model-targeted attacks~\cite{Mei2015Attack,Koh2018Stronger,Suya2020Modeltargeted} is to make the learner output a predefined target model.

A few works suggested data poisoning attacks for degrading fairness of learned models.
In~\cite{Solans2020Poisoning}, Solans et al. proposed a gradient-based poisoning attack against ERM to degrade model fairness without significantly degrading accuracy, but theoretical guarantees are missing. 
Recent works proposed online gradient descent algorithms for poisoning attacks against FERM, with respect to various fairness notions~\cite{Chang2020Adversarial,mehrabi2021exacerbating,van2022poisoning}.
In~\citep{Chang2020Adversarial}, Chang et al. proposed an online gradient descent algorithm for poisoning attacks against FERM with a theoretical performance guarantee. They empirically showed that FERM is less robust than ERM in terms of both accuracy and fairness. In~\citep{van2022poisoning}, Van et al. generalized the framework proposed in~\citep{Chang2020Adversarial} and provided an online gradient descent algorithm that can be used for multiple fairness notions. 
All these existing attack methods are categorized as objective-driven attacks aiming at degrading fairness, while we study a model-targeted attack where the attacker’s goal is to make the fair learner output a predefined target model, when trained on the corrupted data. By varying the target model, the attacker in our work can achieve various objectives such as lowering classification accuracy and degrading fairness.
Note that we consider the setting where attackers are able to flip the labels and sensitive attributes of data, inspired by recent works on label-flipping attacks~\cite{Zhao2017LabelCA,Paudice2018LabelSA,Rosenfeld2020RobustLA}.

Several works have theoretically analyzed the behavior of the fairness-aware learner under data poisoning attacks.
The authors of~\cite{celis2021fair} proposed a fair learning algorithm with guaranteed accuracy and fairness, under adversarial perturbation on labels and sensitive attributes. 
The authors of~\cite{konstantinov2021fairness,konstantinov2022impossibility} analyzed how the risk and unfairness of the fair learner change as a function of the fraction of the corrupted data, against the attacker who can perturb features, labels, and sensitive attributes of a random subset of the training set.
Specifically, \cite{konstantinov2021fairness} provided order-optimal upper/lower bounds on the achievable risk and unfairness performances in a PAC learning sense.
Compared with these existing works, the present paper has two key differences in attacker's goal and the attack model. 
First, while \cite{celis2021fair,konstantinov2021fairness,konstantinov2022impossibility} focused on objective-driven attacks where the attacker's goal is to degrade the accuracy/fairness performance, we consider model-targeted attacks and analyze the minimum amount of perturbation required for a fair learner outputting a predefined target model.
Second, given a fixed budget (number of samples) for data poisoning, the attacker considered in~\cite{konstantinov2021fairness,konstantinov2022impossibility} poisons a random subset of the samples, while the attacker in our work can choose which subset to poison.

\section{Problem Formulation}\label{sec:3}
\subsection{Data distribution}
Let $\mathcal{X}$ denote the set of feature vectors, $\mathcal{Y}$ denote the set of labels, and $\mathcal{Z}$ denote the set of sensitive attributes, \emph{e.g.,} gender and race.
We restrict our attention to the case where $\mathcal{X}$ is the $n$-dimensional real space for any natural number $n$, and $\mathcal{Y}$ and $\mathcal{Z}$ are binary, \emph{i.e.,} $\mathcal{X} = \mathbb{R}^n$ and $\mathcal{Y}=\mathcal{Z}=\{0,1\}$.
Let $X, Y,$ and $Z$ be the jointly distributed random variables that take values in $\mathcal{X}, \mathcal{Y},$ and $\mathcal{Z}$, respectively.
Let $\mathcal{D}$ be the joint distribution of $X, Y,$ and $Z$.
Then $\Pr_{\mathcal{D}}(\cdot)$ and $\mathbb{E}_{\mathcal{D}}[\cdot]$ denote the probability\footnote{For ease of presentation, we did not mention the $\sigma$-algebra over which $\Pr_{\mathcal{D}}(\cdot)$ is defined.
When the ambient space is $\mathbb{R}^n$, we consider the Lebesgue $\sigma$-algebra, the collection of all Lebesgue measurable sets.
When the ambient space is a finite set, we use its power set, the collection of all subsets of it.} and expectation over $\mathcal{D}$, respectively.

We use $\mathcal{D}_{X|Y=y,Z=z}$ to denote the distribution of $X$ conditioned on $Y=y,Z=z$ for each $(y,z)\in \mathcal{Y}\times \mathcal{Z}$, and $\mathcal{D}_X$ to denote the marginal distribution of $X$.
For analytical purposes, we assume that, for each $(y,z)\in \{0,1\}\times \{0, 1\}$, $\mathcal{D}_{X|Y=y,Z=z}$ has the density function\footnote{Having a density function is closely related to absolute continuity.
In this work, we consider a probability distribution over $\mathbb{R}^n$ whose probability space is a triple $(\mathbb{R}^n, \mathcal{L}(\mathbb{R}^n), \nu)$, where $\mathcal{L}(\mathbb{R}^n)$ is the collection of all Lebesgue measurable sets, and the measure $\nu$ assigns the probability for $E\in \mathcal{L}(\mathbb{R}^n)$.
Then, by the Radon–Nikodym theorem~\cite{Billingsley1995PM}, the measure $\nu$ has the density function with respect to the Lebesgue measure $\mu$ if and only if $\nu$ is absolutely continuous with respect to $\mu$.} $f_{X|Y=y, Z=z}(x|y, z)$ with respect to the Lebesgue measure $\mu$ satisfying $\Pr(X\in E|Y=y, Z=z)=\int_E f_{X|Y=y, Z=z} \diff\mu$ for any Lebesgue measurable set $E\in\mathbb{R}^n$.
Then the joint density function $f(x,y,z)$ of $\mathcal{D}$ is $f_{X|Y=y, Z=z}(x|y, z) \Pr_{\mathcal{D}}(Y=y, Z=z)$, and the marginal density function of $X$, denoted $f_X(x)$, is $\sum_{(y,z)\in \{0,1\}\times \{0, 1\}} f(x,y,z)$.

\subsection{Learning a model with fairness constraints}
In this work, we consider a model $h:\mathcal{X}\rightarrow \mathcal{Y}$ that does not suffer from disparate treatment, \emph{i.e.,} $h$ does not take the sensitive attribute $z \in \mathcal{Z}$ as input.
Let $\mathcal{H}$ be the hypothesis class.
Let $\ell: \mathcal{Y}\times \mathcal{Y} \rightarrow \{0,1\}$ be the $0/1$ loss function, \emph{i.e.,} $\ell(\hat{y}, y)=\mathbb{1}(y\neq \hat{y})$ where $\mathbb{1}(\cdot)$ is the indicator function.
Let $R_{\ell}(h;\mathcal{D})$ be the true risk of $h$ on $\mathcal{D}$, \emph{i.e.,} $R_{\ell}(h;\mathcal{D})=\mathbb{E}_{\mathcal{D}}[\ell(h (X), Y)]$.
We build our theory upon \emph{equal opportunity}~\cite{Hardt_2016}, but our analysis can be generalized to \emph{demographic parity}~\cite{Feldman_2015} (see Appendix~\ref{sec:A} for details).
A model $h: \mathcal{X}\rightarrow \mathcal{Y}$ satisfies equal opportunity on the distribution $\mathcal{D}$ if
$$\Pr_{\mathcal{D}} (h(X)=1|Y=1, Z=0) = \Pr_{\mathcal{D}} (h(X)=1|Y=1, Z=1).$$
We measure the unfairness of a model by capturing the dissimilarity between true positive rates across the sensitive attributes, which is similar to methods used in~\cite{Chang2020Adversarial,Roh2020FairBatch,roh2021sample}.

\begin{definition}\label{def:1}
The \emph{fairness gap} of a model $h:\mathcal{X}\rightarrow \mathcal{Y}$ on the distribution $\mathcal{D}$, denoted $\Delta(h,\mathcal{D})$, is
$$\max_{z\in \mathcal{Z}} \Big|\Pr_{\mathcal{D}} (h(X)=1|Y=1, Z=z) - \Pr_{\mathcal{D}} (h(X)=1|Y=1) \Big|.$$
For $\delta \in [0, 1]$, $h$ is \emph{$\delta$-fair} on $\mathcal{D}$ if $\Delta(h, \mathcal{D}) \le \delta$.
The model $h$ is \emph{perfectly fair} on $\mathcal{D}$ if it is $0$-fair.
We similarly define the fairness gap, $\delta$-fairness, and perfect fairness of $h$ on the training set $D$ by using the empirical probability $\Pr_D(\cdot)$ over $D$.
\end{definition}

\textbf{Learner:}
We assume that the learner can solve any optimization problem with infinite computing power.
Moreover, the learner's hypothesis class $\mathcal{H}$ consists of some Lebesgue measurable functions from $\mathcal{X}$ to $\mathcal{Y}$, so any $h\in \mathcal{H}$ is deterministic.
The learner's goal is to find the model in $\mathcal{H}$ that achieves the minimum \emph{true} risk among perfectly fair models, which we call Fair True Risk Minimization (FTRM), by solving the following constrained optimization problem:
\begin{align}\label{eq:1}
  \min_h \{R_\ell(h; \mathcal{D}) \colon\ h\in \mathcal{H}, h \text{~is perfectly fair on~}\mathcal{D}\}.
\end{align}
We denote the set of solutions of \eqref{eq:1} by $\mathcal{A}_{0}(\mathcal{D})$.
Moreover, we define $\mathcal{A}_{\delta}(\mathcal{D})$ as the set of solutions of $\min_h\{R_{\ell}(h;\mathcal{D}) \colon\ h\in \mathcal{H}, h \text{~is $\delta$-fair on $\mathcal{D}$}\}$.
Note that $\mathcal{A}_{1}(\mathcal{D})$ is the set of unconstrained true risk minimizers since any model is $1$-fair.

\subsection{Flipping attacks}
We consider flipping attacks which belong to data poisoning attacks.
\begin{definition}
Let $\mathcal{D}$ and $\mathcal{D}'$ be probability distributions over $\mathcal{X}\times\mathcal{Y}\times\mathcal{Z}$ with density functions $f$ and $f'$, respectively.

(i) We say $\mathcal{D}'$ is obtained by \emph{flipping attack} on $\mathcal{D}$ if $f_X(x)$ and $f_X'(x)$, the marginal density functions of $X$, are the same almost everywhere in $\mathcal{X}$, \emph{i.e.,} $\mu(\mathcal{X}\setminus \{x\in \mathcal{X}\colon\ f_X(x)= f_X'(x)\})= 0$.

There are three pure flipping attacks as follows.

(ii) We say $\mathcal{D}'$ is obtained by \emph{pure $Y$-flipping attack} on $\mathcal{D}$ if $$\mu(\mathcal{X}\setminus \{x\in \mathcal{X}\colon\ f(x,0,0)+f(x,1,0)=f'(x,0,0)+f'(x,1,0), f(x,0,1)+f(x,1,1)=f'(x,0,1)+f'(x,1,1) \})= 0.$$

(iii) We say $\mathcal{D}'$ is obtained by \emph{pure $Z$-flipping attack} on $\mathcal{D}$ if $$\mu(\mathcal{X}\setminus \{x\in \mathcal{X}\colon\ f(x,0,0)+f(x,0,1)=f'(x,0,0)+f'(x,0,1), f(x,1,0)+f(x,1,1)=f'(x,1,0)+f'(x,1,1) \})= 0.$$

(iv) We say $\mathcal{D}'$ is obtained by \emph{pure $(Y\&Z)$-flipping attack} on $\mathcal{D}$ if $$\mu(\mathcal{X}\setminus \{x\in \mathcal{X}\colon\ f(x,0,0)+f(x,1,1)=f'(x,0,0)+f'(x,1,1), f(x,0,1)+f(x,1,0)=f'(x,0,1)+f'(x,1,0) \})= 0.$$
\end{definition}

Let us interpret pure flipping attacks.
For example, consider the case where $\mathcal{D}'$ is obtained by pure $Y$-flipping attack on $\mathcal{D}$.
By definition, $f(x,0,0)+f(x,1,0)=f'(x,0,0)+f'(x,1,0)$ and $f(x,0,1)+f(x,1,1)=f'(x,0,1)+f'(x,1,1)$ almost everywhere in $\mathcal{X}$.
Then there exist $a_x \in [-f(x,1,0), f(x,0,0)]$ and $b_x \in [-f(x,1,1), f(x,0,1)]$ such that $f'(x,0,0)=f(x,0,0)-a_x$, $f'(x,1,0)=f(x,1,0)+a_x$, $f'(x,0,1)=f(x,0,1)-b_x$, and $f'(x,1,1)=f(x,1,1)+b_x$.
For simplicity, we assume that $a_x \in [0, f(x,0,0)]$ and $b_x \in [0, f(x,0,1)]$.
Then $\mathcal{D}'$ can be realized by flipping the $Y$ value with probability $\sfrac{a_x}{f(x,0,0)}$ when $X=x, Y=0, Z=0$ and flipping the $Y$ value with probability $\sfrac{b_x}{f(x,0,1)}$ when $X=x, Y=0, Z=1$.
Other pure flipping attacks can be interpreted similarly.
Below we provide a toy example showing the effect of flipping attack schemes defined above.

\begin{example}\label{ex:1}
Let $\mathcal{D}$ be a probability distribution over $\mathcal{X}\times\mathcal{Y}\times\mathcal{Z}$ where samples with $Y=y$ and $Z=z$ are uniformly distributed with density of $1$ on the square region in the $k$-th quadrant, where $k = 3 - y + z - 2yz$.
Shown in Fig.~\ref{fig:2a} is a visualization of $\mathcal{D}$.
Fig.~\ref{fig:2b} shows the distribution obtained by pure $Y$-flipping attack on $\mathcal{D}$ ($Y$ values in circular regions in the second and fourth quadrants are flipped).
Similarly, Fig.~\ref{fig:2c} shows the distribution obtained by pure $Z$-flipping attack on $\mathcal{D}$, and Fig.~\ref{fig:2d} shows the distribution obtained by pure $(Y\&Z)$-flipping attack on $\mathcal{D}$.
Lastly, Fig.~\ref{fig:2e} shows the distribution obtained by flipping attack which is not pure.
\end{example}

\begin{figure}[t]
    \centering
    \begin{subfigure}[b]{0.19\columnwidth}
        \includegraphics[width=\columnwidth]{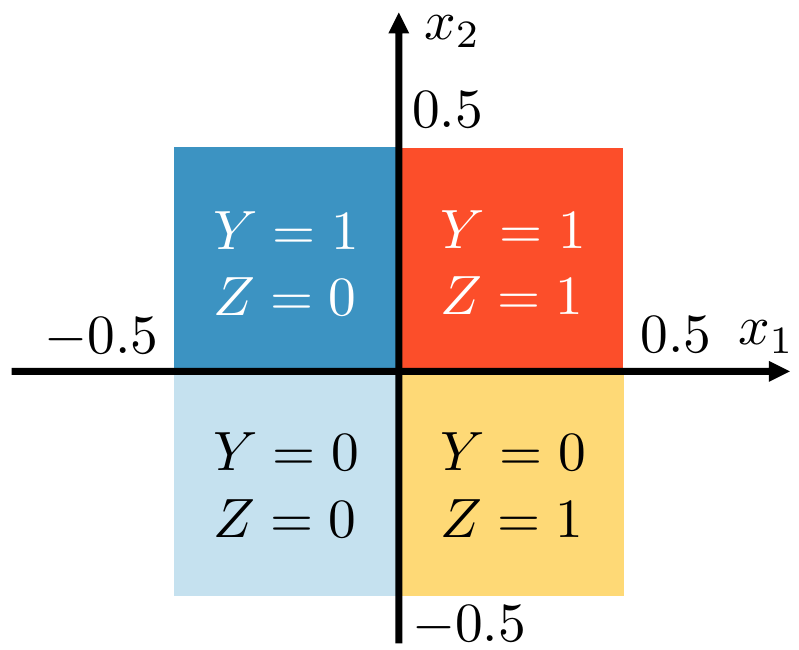}
        \caption{Uncorrupted data $\mathcal{D}$}
        \label{fig:2a}
    \end{subfigure}
    \begin{subfigure}[b]{0.19\columnwidth}
        \includegraphics[width=\columnwidth]{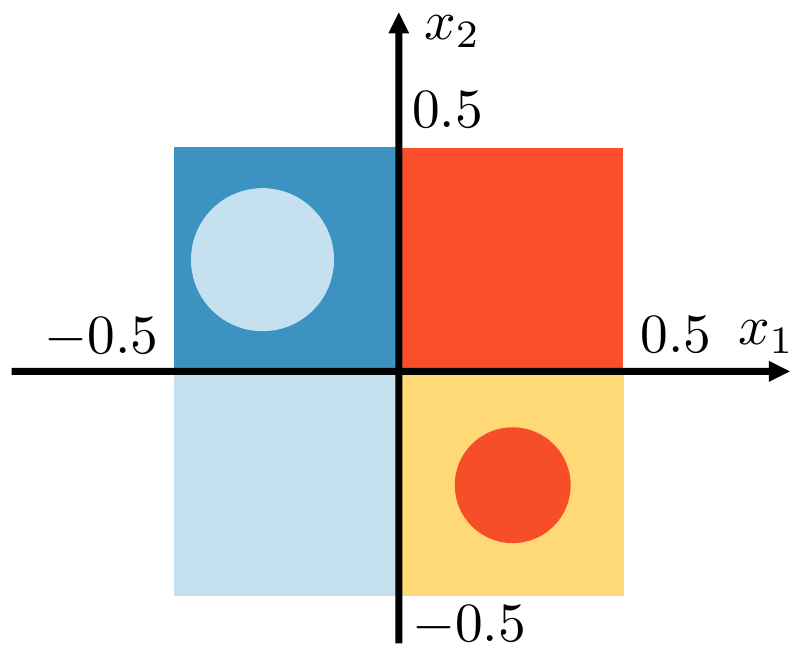}
        \caption{Pure $Y$-flipping}
        \label{fig:2b}
    \end{subfigure}
    \begin{subfigure}[b]{0.19\columnwidth}
        \includegraphics[width=\columnwidth]{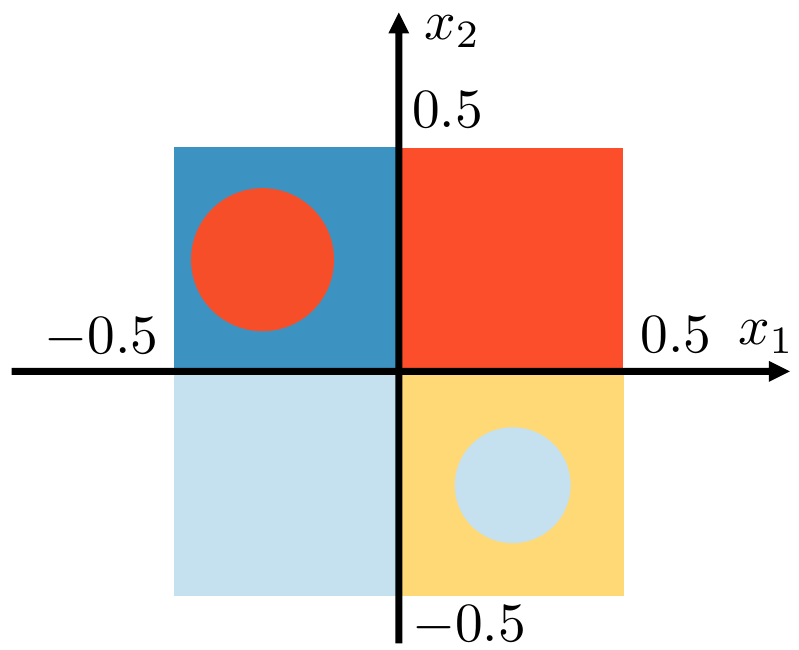}
        \caption{Pure $Z$-flipping}
        \label{fig:2c}
    \end{subfigure}
    \begin{subfigure}[b]{0.19\columnwidth}
        \includegraphics[width=\columnwidth]{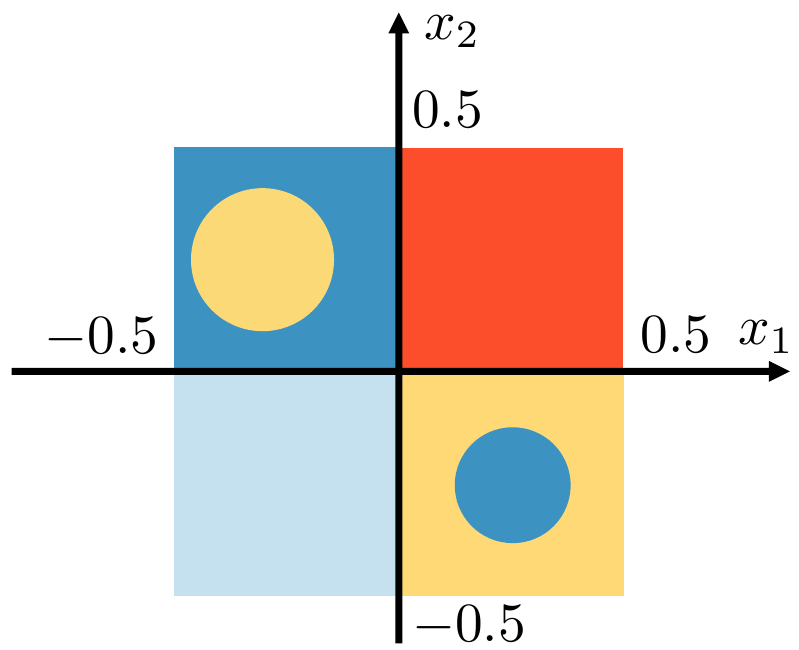}
        \caption{Pure $(Y\&Z)$-flipping}
        \label{fig:2d}
    \end{subfigure}
    \begin{subfigure}[b]{0.19\columnwidth}
        \includegraphics[width=\columnwidth]{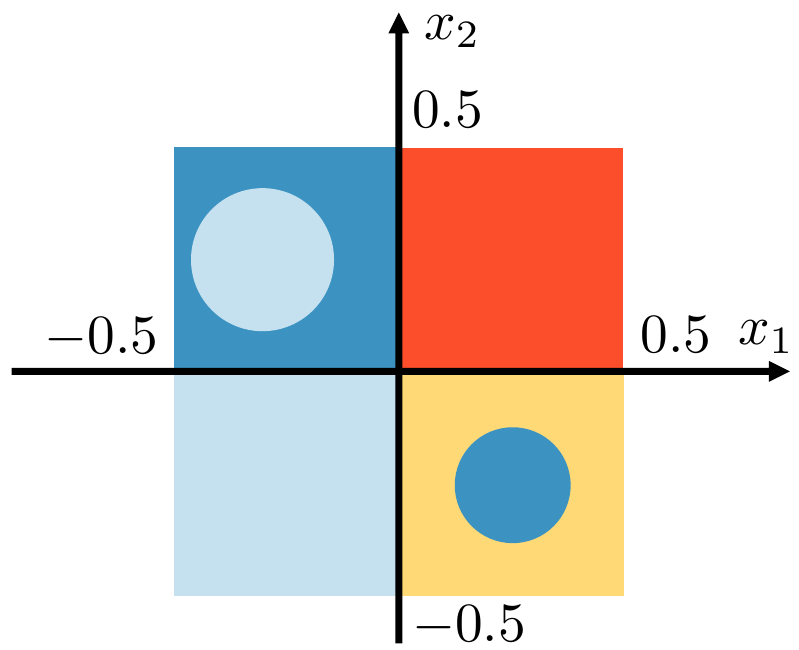}
        \caption{General flipping}
        \label{fig:2e}
    \end{subfigure}    
    \caption{Flipping attacks on $\mathcal{D}$ when $\mathcal{D}$ follows the description in Example~\ref{ex:1}.
    }
    \label{fig:2}
\end{figure}

\textbf{Attacker:}
The attacker knows the entire learning procedure (white-box attack) and can make the learner train the model on another distribution $\mathcal{D}'$ with the following constraints.
(1) The conditional distribution $\mathcal{D}'_{X|Y=y,Z=z}$ has a density function with respect to the Lebesgue measure $\mu$ for each $(y,z) \in \mathcal{Y}\times \mathcal{Z}$; if this does not hold, the attack may be easily detected by the learner.
(2) The distribution $\mathcal{D}'$ is obtained by flipping attacks on $\mathcal{D}$, \emph{i.e.,} $\mathcal{D}'_X=\mathcal{D}_X$.
Thus, the attacker's search space $\mathcal{S}$ is 
\begin{align}
    \mathcal{S}&=\{\mathcal{D}'\colon\  \mathcal{D}' \text{~is a prob. dist. over~} \mathcal{X}\times\mathcal{Y}\times\mathcal{Z}, \mathcal{D}'_X=\mathcal{D}_X, \mathcal{D}'_{X|Y=y,Z=z} \text{~has a density w.r.t. $\mu$~} \forall (y,z) \in \mathcal{Y}\times \mathcal{Z}\} \label{eq:2}
\end{align}
The attacker's goal is to make the learner output the target model $h_{\text{target}}$ with the minimum amount of data perturbation, measured in the total variation distance.
For two distributions $\mathcal{D}_1$ and $\mathcal{D}_2$ over $\mathcal{X}\times\mathcal{Y}\times\mathcal{Z}$, the total variation distance between $\mathcal{D}_1$ and $\mathcal{D}_2$, denoted $d_{\text{TV}}(\mathcal{D}_1, \mathcal{D}_2)$, is
\begin{align}
\frac{1}{2} \sum_{(y,z)\in \mathcal{Y}\times\mathcal{Z}} \int_{\mathbb{R}^n}\left|f_1(x,y,z)-f_2(x,y,z)\right| \diff\mu,
\end{align} 
where $f_1(x,y,z)$ and $f_2(x,y,z)$ are (mixed) joint density functions of $\mathcal{D}_1$ and $\mathcal{D}_2$, respectively.
Hence the attacker solves the following bilevel optimization problem:
\begin{align}\label{eq:4}
\min_{\mathcal{D}'} \big\{d_{\text{TV}}(\mathcal{D}, \mathcal{D}') \colon\ \mathcal{D}'\in \mathcal{S}, \mathcal{A}_{0}(\mathcal{D}')= \{h_{\text{target}}\} \big\}.
\end{align}
Define the infimum of the objective function of \eqref{eq:4} as  \begin{align}
    d^\star_{\text{TV}}(h_{\text{target}}) = \inf_{\mathcal{D}'\in \Lambda_{0}(h_{\text{target}})} d_{\text{TV}}(\mathcal{D}, \mathcal{D}'),    
\end{align}
where $\Lambda_{\delta}(h):=\{\mathcal{D}' \colon\ \mathcal{D}'\in \mathcal{S}, \mathcal{A}_{\delta}(\mathcal{D}')= \{h\}\}$.
In other words, $d^\star_{\text{TV}}(h_{\text{target}})$ is the minimum amount of data perturbation for FTRM to output the target model $h_{\text{target}}\in \mathcal{H}$.

\section{Main Results}\label{sec:4}
In this section, we analyze $d^\star_{\text{TV}}(h)$ for a general target model $h\in\mathcal{H}$.
The following theorem provides the lower and upper bounds on $d^\star_{\text{TV}}(h)$.

\begin{theorem}\label{thm:1}
Let $h\in \mathcal{H}$.
Then,
\begin{align*}
C(h, \mathcal{D}) \le d^\star_{\text{TV}}(h) \le \inf_{\widetilde{\mathcal{D}}\in \Lambda_{1}(h)} \left(d_{\text{TV}}(\mathcal{D}, \widetilde{\mathcal{D}}) + C(h, \widetilde{\mathcal{D}})\right)
\end{align*}
\begin{align*}
\text{where~} &C(h, \mathcal{D}) := \frac{|p_hs_h-q_hr_h|}{\max\{p_h+r_h, q_h+s_h\}},\\
 &p_h={\textstyle\Pr_{\mathcal{D}}}(h(X)=0, Y=1, Z=0),q_h= {\textstyle\Pr_{\mathcal{D}}}(h(X)=1, Y=1, Z=0),\\
&r_h={\textstyle\Pr_{\mathcal{D}}}(h(X)=0, Y=1, Z=1),s_h= {\textstyle\Pr_{\mathcal{D}}}(h(X)=1, Y=1, Z=1),\\
&\text{and~} C(h, \widetilde{\mathcal{D}}) \text{~is defined similarly over~} \widetilde{\mathcal{D}}.
\end{align*}
\end{theorem}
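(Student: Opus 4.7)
My plan is to prove the lower and upper inequalities separately. Throughout I write $p'_h, q'_h, r'_h, s'_h$ for the quantities defined in the theorem but evaluated on a perturbed distribution $\mathcal{D}'$, set $\Delta_p = p'_h - p_h$ (and similarly $\Delta_q, \Delta_r, \Delta_s$), and partition the feature space into $A = h^{-1}(0)$ and $B = h^{-1}(1)$. The key structural fact I use is that a flipping attack preserves $f_X$, so at each $x$ the four perturbations $(f'-f)(x,y,z)$ sum to zero; writing $\tau(x) = \tfrac12\sum_{y,z}|f(x,y,z)-f'(x,y,z)|$ and $\epsilon_A = \int_A \tau\,d\mu$, $\epsilon_B = \int_B \tau\,d\mu$, one has $\epsilon_A + \epsilon_B = d_{\text{TV}}(\mathcal{D},\mathcal{D}')$ together with the pointwise bound $|(f'-f)(x,y,z)| \le \tau(x)$, which gives $|\Delta_p|, |\Delta_r| \le \epsilon_A$ and $|\Delta_q|, |\Delta_s| \le \epsilon_B$, plus the extra ``joint'' constraints $|\Delta_p+\Delta_r| \le \epsilon_A$ and $|\Delta_q+\Delta_s| \le \epsilon_B$ (which follow from controlling the net $Y$-flip mass at each $x$).

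For the lower bound I fix any $\mathcal{D}'\in\Lambda_0(h)$, so that $p'_hs'_h = q'_hr'_h$. Expanding $(p_h+\Delta_p)(s_h+\Delta_s) = (q_h+\Delta_q)(r_h+\Delta_r)$ and rearranging produces the exact identity
\begin{equation*}
p_hs_h - q_hr_h \;=\; q'_h\Delta_r + r_h\Delta_q - p'_h\Delta_s - s_h\Delta_p,
\end{equation*}
together with its symmetric partner obtained by exchanging which pair keeps the primes. The linear form $q_h\Delta_r + r_h\Delta_q - p_h\Delta_s - s_h\Delta_p$ on the hexagonal polytope cut out by the $\Delta$-constraints attains its maximum at the vertices $(-\epsilon_A,\epsilon_A)$ for $(\Delta_p,\Delta_r)$ and $(\epsilon_B,-\epsilon_B)$ for $(\Delta_q,\Delta_s)$, giving the bound $(q_h+s_h)\epsilon_A + (p_h+r_h)\epsilon_B \le \max\{p_h+r_h,q_h+s_h\}\cdot d_{\text{TV}}(\mathcal{D},\mathcal{D}')$. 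Combining this with the identity while handling the residual quadratic contributions $\Delta_p\Delta_s - \Delta_q\Delta_r$ (using both forms of the identity to cancel the $\mathcal{D}'$-side coefficients at linear order and then absorbing what remains through the polytope constraints) yields $|p_hs_h-q_hr_h| \le \max\{p_h+r_h, q_h+s_h\}\cdot d_{\text{TV}}(\mathcal{D},\mathcal{D}')$. Rearranging and taking the infimum over $\mathcal{D}'$ gives $C(h,\mathcal{D}) \le d^\star_{\text{TV}}(h)$.

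For the upper bound I fix $\widetilde{\mathcal{D}}\in\Lambda_1(h)$ and construct $\mathcal{D}'\in\Lambda_0(h)$ by a single explicit pure $Z$-flip on $\widetilde{\mathcal{D}}$. Writing $\widetilde{p}, \widetilde{q}, \widetilde{r}, \widetilde{s}$ for the four probabilities on $\widetilde{\mathcal{D}}$, assume WLOG that $\widetilde{p}\widetilde{s} > \widetilde{q}\widetilde{r}$ and that the denominator of $C(h,\widetilde{\mathcal{D}})$ is attained by $\widetilde{q}+\widetilde{s}$; flip $Z$ from $0$ to $1$ on total mass $\alpha := (\widetilde{p}\widetilde{s} - \widetilde{q}\widetilde{r})/(\widetilde{q}+\widetilde{s}) = C(h,\widetilde{\mathcal{D}})$ inside $A\cap\{Y=1\}$, which by direct computation decreases $\widetilde{p}$ by $\alpha$ and increases $\widetilde{r}$ by $\alpha$ while leaving $\widetilde{q},\widetilde{s}$ unchanged, enforcing $p'_hs'_h = q'_hr'_h$. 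Crucially, a pure $Z$-flip leaves the joint marginal of $(X,Y)$ intact, so the risk $R_\ell(h'';\cdot)$ of every hypothesis $h''$ is identical on $\widetilde{\mathcal{D}}$ and $\mathcal{D}'$; hence $h$ remains the unique unconstrained risk minimizer on $\mathcal{D}'$, and since it is now perfectly fair it is also the unique perfectly fair one, i.e.\ $\mathcal{D}'\in\Lambda_0(h)$. The triangle inequality then yields $d^\star_{\text{TV}}(h)\le d_{\text{TV}}(\mathcal{D},\widetilde{\mathcal{D}}) + C(h,\widetilde{\mathcal{D}})$, and taking the infimum over $\widetilde{\mathcal{D}}\in\Lambda_1(h)$ completes the upper bound.

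I expect the hard step to be the lower bound's quadratic correction: applying the naive triangle inequality to either single form of the identity loses an $O(d_{\text{TV}}(\mathcal{D},\mathcal{D}')^2)$ term that is not present in $C(h,\mathcal{D})$. Handling this cleanly requires exploiting the flipping-specific constraint $|\Delta_p+\Delta_r|\le\epsilon_A$ (and its $B$-analogue), which distinguishes flipping attacks from arbitrary TV perturbations and, together with the two symmetric forms of the identity, should be enough to pin the bound exactly at $C(h,\mathcal{D})$.
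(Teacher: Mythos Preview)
Your upper bound is correct and matches the paper's construction essentially verbatim: pick $\widetilde{\mathcal{D}}\in\Lambda_1(h)$, apply a pure $Z$-flip of total mass $C(h,\widetilde{\mathcal{D}})$ to make $h$ perfectly fair while leaving the $(X,Y)$-marginal (hence every risk) untouched, then use the triangle inequality.

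Your lower bound strategy is genuinely different from the paper's and, as you yourself flag, is not complete. You reduce to four increments $\Delta_p,\Delta_q,\Delta_r,\Delta_s$ together with the hexagon constraints $|\Delta_p|,|\Delta_r|,|\Delta_p+\Delta_r|\le\epsilon_A$ and $|\Delta_q|,|\Delta_s|,|\Delta_q+\Delta_s|\le\epsilon_B$, expand the fairness identity $p'_hs'_h=q'_hr'_h$, and bound the linear part correctly as $(q_h+s_h)\epsilon_A+(p_h+r_h)\epsilon_B$. But the residual bilinear term $\Delta_q\Delta_r-\Delta_p\Delta_s$ (equivalently the primed coefficients in your identity) is not controlled by your hexagon constraints alone: a naive bound leaves an $O(\epsilon_A\epsilon_B)$ slack, and neither ``averaging the two symmetric forms'' nor ``absorbing what remains through the polytope constraints'' eliminates it without a further idea. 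You have correctly located the difficulty but not resolved it.

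The paper avoids this quadratic obstruction altogether by a different reduction. It first lower-bounds $d_{\text{TV}}$ by $\tfrac12\sum_{a,b,c}|\alpha_{a,b,c}|$ over \emph{all eight} cells (including the $Y=0$ sector, which your $\Delta$'s discard), and then introduces auxiliary variables
\[
\beta_{a,1,c}\;=\;\alpha_{a,1,c}+\frac{p'_{0,1,c}}{p'_{0,1,0}+p'_{0,1,1}}\bigl(\alpha_{a,0,0}+\alpha_{a,0,1}\bigr),
\]
with weights chosen from the \emph{perturbed} distribution. This substitution simultaneously (i) uses the flipping constraint $\sum_{b,c}\alpha_{a,b,c}=0$ to force $\beta_{a,1,0}=-\beta_{a,1,1}$, (ii) makes the fairness constraint $p'_{1,1,0}p'_{0,1,1}=p'_{0,1,0}p'_{1,1,1}$ become an exact \emph{linear} relation $\beta_{0,1,1}=\tfrac{p+r}{q+s}\beta_{1,1,1}+\tfrac{ps-qr}{q+s}$, and (iii) via the triangle inequality yields $d_{\text{TV}}\ge|\beta_{1,1,1}|+|\beta_{0,1,1}|$. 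Minimizing this piecewise-linear one-variable function gives $C(h,\mathcal{D})$ on the nose. The key point is that the $Y=0$ increments you dropped are exactly what get folded into the $\beta$'s to linearize the fairness constraint; working only with $\Delta_p,\Delta_q,\Delta_r,\Delta_s$ loses this degree of freedom and is why your approach stalls at the quadratic term.
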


\begin{proof}
The lower bound is derived in Sec.~\ref{sec:4:a} by finding the minimum amount of data perturbation required for making $h$ look perfectly fair on the poisoned distribution.
The upper bound is derived in Sec.~\ref{sec:4:b} by constructing an explicit data distribution via two-stage attack algorithm.
\end{proof}

We note that our bounds on $d^\star_{\text{TV}}(h)$ in Thm.~\ref{thm:1} can possibly be loose.
For example, let $\mathcal{X}=\mathbb{R}^n$ for $n\ge 2$, and $\mathcal{H}$ be the set of all Lebesgue measurable functions from $\mathcal{X}$ to $\mathcal{Y}$.
Consider a distribution $\mathcal{D}$ on which its unconstrained risk minimizer is not perfectly fair, and let $h$ be the model that achieves the minimum risk among perfectly fair models.
Then, it is clear that $d^\star_{\text{TV}}(h)=0$; the attacker does not need to poison $\mathcal{D}$ at all.
Since $h$ is not equal to the unconstrained risk minimizer on $\mathcal{D}$, $d_{\text{TV}}(\mathcal{D}, \widetilde{\mathcal{D}})>0$ for any $\widetilde{\mathcal{D}}\in \Lambda_{1}(h)$.
Thus, we have ${\textstyle\inf_{\widetilde{\mathcal{D}}\in \Lambda_{1}(h)}} (d_{\text{TV}}(\mathcal{D}, \widetilde{\mathcal{D}}) + C(h, \widetilde{\mathcal{D}}))>0$, and $d^\star_{\text{TV}}(h) \lneq {\textstyle\inf_{\widetilde{\mathcal{D}}\in \Lambda_{1}(h)}} (d_{\text{TV}}(\mathcal{D}, \widetilde{\mathcal{D}}) + C(h, \widetilde{\mathcal{D}}))$.
This shows that the bounds in Thm.~\ref{thm:1} are not tight in general.
However, when $h$ is the unique unconstrained risk minimizer, our bounds are tight by the following corollary.

\begin{corollary}\label{cor:1}
Let $h^*$ be the unique unconstrained risk minimizer, \emph{i.e.,}  $\arg\min_{g\in \mathcal{H}} R_{\ell}(g; \mathcal{D}) = \{h^*\}$. Then, $d^\star_{\text{TV}}(h^*) = C(h^*,\mathcal{D})$.
\end{corollary}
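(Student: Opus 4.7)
The plan is to derive the corollary directly from Theorem~\ref{thm:1} by choosing $\widetilde{\mathcal{D}} = \mathcal{D}$ in the upper bound. The lower bound is immediate: Theorem~\ref{thm:1} already gives $d^\star_{\text{TV}}(h^*) \ge C(h^*, \mathcal{D})$, with no hypothesis on $h^*$ needed. So the whole content of the corollary is the matching upper bound $d^\star_{\text{TV}}(h^*) \le C(h^*, \mathcal{D})$, and the uniqueness assumption is used exactly to make $\mathcal{D}$ itself admissible in the infimum defining that upper bound.

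Concretely, I would first verify that $\mathcal{D} \in \Lambda_{1}(h^*)$. By definition $\Lambda_{1}(h^*) = \{\mathcal{D}' \colon \mathcal{D}' \in \mathcal{S}, \mathcal{A}_{1}(\mathcal{D}') = \{h^*\}\}$. The clean distribution $\mathcal{D}$ satisfies $\mathcal{D}_X = \mathcal{D}_X$ trivially, and the conditional density assumptions that define $\mathcal{S}$ are already assumed for $\mathcal{D}$ in Section~\ref{sec:3}, so $\mathcal{D} \in \mathcal{S}$. Moreover, recall that $\mathcal{A}_{1}(\mathcal{D})$ is precisely the set of unconstrained true risk minimizers on $\mathcal{D}$, and by hypothesis this set equals $\{h^*\}$. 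Hence $\mathcal{D} \in \Lambda_{1}(h^*)$.

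Next I would instantiate the upper bound of Theorem~\ref{thm:1} with this particular choice:
\begin{align*}
d^\star_{\text{TV}}(h^*) \;\le\; \inf_{\widetilde{\mathcal{D}}\in \Lambda_{1}(h^*)} \bigl(d_{\text{TV}}(\mathcal{D}, \widetilde{\mathcal{D}}) + C(h^*, \widetilde{\mathcal{D}})\bigr) \;\le\; d_{\text{TV}}(\mathcal{D}, \mathcal{D}) + C(h^*, \mathcal{D}) \;=\; C(h^*, \mathcal{D}).
\end{align*}
Combining with the lower bound from Theorem~\ref{thm:1} gives $d^\star_{\text{TV}}(h^*) = C(h^*, \mathcal{D})$.

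There is essentially no obstacle: all the work has been done in Theorem~\ref{thm:1}, and the uniqueness of $h^*$ as the unconstrained risk minimizer is exactly the hypothesis needed to ensure $\mathcal{D}$ itself lies in $\Lambda_{1}(h^*)$, thereby collapsing the two-stage upper bound (first make the distribution have $h^*$ as its unique unconstrained risk minimizer, then make $h^*$ appear perfectly fair) into its second stage only. The remark immediately preceding the corollary already explained why the bounds can be loose otherwise; here the looseness vanishes precisely because the first stage is free.
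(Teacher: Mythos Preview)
Your proposal is correct and follows essentially the same approach as the paper: verify that the uniqueness hypothesis gives $\mathcal{D}\in\Lambda_1(h^*)$, plug $\widetilde{\mathcal{D}}=\mathcal{D}$ into the upper bound of Theorem~\ref{thm:1}, and match it against the lower bound. If anything, you are slightly more explicit than the paper in checking that $\mathcal{D}\in\mathcal{S}$.
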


\begin{proof}
Recall that $\Lambda_{1}(h)=\{\mathcal{D}' \colon\ \mathcal{D}'\in \mathcal{S}, \mathcal{A}_{1}(\mathcal{D}')= \{h\}\}$, and $\mathcal{A}_{1}(\mathcal{D})$ is the set of unconstrained true risk minimizers.
Since $h^*$ is the unique unconstrained risk minimizer on $\mathcal{D}$,
$\mathcal{D}\in \Lambda_{1}(h^*)$.
Thus our upper bound on $d^\star_{\text{TV}}(h^*)$ given by Thm.~\ref{thm:1} is upper bounded by
$$d_{\text{TV}}(\mathcal{D}, \mathcal{D}) + C(h^*, \mathcal{D})= C(h^*, \mathcal{D}),$$
and this is equal to the lower bound on $d^\star_{\text{TV}}(h^*)$ given by Thm.~\ref{thm:1}.
Therefore, $d^\star_{\text{TV}}(h^*) = C(h^*,\mathcal{D})$.
\end{proof}

\subsection{Lower bound on $d^\star_{\text{TV}}(h)$}\label{sec:4:a}
The following lemma provides the key inequality to derive the lower bound on $d^\star_{\text{TV}}(h)$.
\begin{lemma}\label{lem:1}
Let $\mathcal{X}= \mathbb{R}^n$, $\mathcal{Y}=\{0,1\}$, $\mathcal{Z}=\{0,1\}$, and $\mathcal{D}' \in \mathcal{S}$.
If $h \in \mathcal{H}$ is perfectly fair on $\mathcal{D}'$, then 
\begin{align}\label{eq:6}
    d_{\text{TV}}(\mathcal{D}, \mathcal{D}')\ge  C(h, \mathcal{D})
\end{align}
\vskip -0.2in
\end{lemma}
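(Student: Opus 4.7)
The plan is to prove the equivalent inequality $|p_h s_h - q_h r_h| \le \max(p_h + r_h, q_h + s_h) \cdot d_{\text{TV}}(\mathcal{D}, \mathcal{D}')$, which rearranges directly to the claim. The two main ingredients are an algebraic identity expressing $p_h s_h - q_h r_h$ as a linear functional of the perturbation $\mathcal{D}' - \mathcal{D}$ (using perfect fairness of $h$ on $\mathcal{D}'$), and a local budget analysis that turns this functional bound into a bound by $d_{\text{TV}}(\mathcal{D}, \mathcal{D}')$.

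Let $\tilde p, \tilde q, \tilde r, \tilde s$ denote the analogues of $p_h, q_h, r_h, s_h$ on $\mathcal{D}'$, and set $\delta_p := p_h - \tilde p$ (and analogously for $\delta_q, \delta_r, \delta_s$). The perfect fairness hypothesis reads $\tilde p \tilde s = \tilde q \tilde r$. Using this, I decompose $p_h s_h - q_h r_h = (p_h s_h - \tilde p \tilde s) + (\tilde q \tilde r - q_h r_h)$ and expand each product difference via the bilinear identity $ab - \tilde a \tilde b = \tilde b (a - \tilde a) + a(b - \tilde b)$, which yields $p_h s_h - q_h r_h$ as a linear combination of the four $\delta$'s with probability-weighted coefficients.

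Next, since $\mathcal{D}'$ arises from a flipping attack, the perturbation $u(x, y, z) := f'(x, y, z) - f(x, y, z)$ satisfies $\sum_{y, z} u(x, y, z) = 0$ almost everywhere, and $d_{\text{TV}}(\mathcal{D}, \mathcal{D}') = \int B(x) \diff\mu$ where $B(x) := \tfrac{1}{2}\sum_{y, z}|u(x, y, z)|$ is the local TV-budget. Under this pointwise zero-sum constraint, an LP-duality analysis (maximizing the $x$-contribution to the linear functional from the first step subject to $\|u(x)\|_1 = 2B(x)$) shows that for $x \in h^{-1}(0)$ the contribution is at most $(q_h + s_h) B(x)$, and for $x \in h^{-1}(1)$ at most $(p_h + r_h) B(x)$; the extremal configurations are exactly the local pure $Z$-flips that appear in the construction of Sec.~\ref{sec:4:b}. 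Integrating over $\mathcal{X}$ and bounding by the larger of the two coefficients gives the target inequality.

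The main obstacle is that the bilinear decomposition in the first step naturally produces coefficients mixing $\mathcal{D}$- and $\mathcal{D}'$-probabilities (for instance coefficients $-s_h, -\tilde p, \tilde r, q_h$ on $\delta_p, \delta_s, \delta_q, \delta_r$), whereas $C(h, \mathcal{D})$ is expressed purely in terms of $p_h + r_h$ and $q_h + s_h$. Reconciling these I expect to require either averaging the two dual forms of the bilinear identity (which cancels a residual quadratic term $\delta_p \delta_s - \delta_q \delta_r$ but introduces averages $(* + \tilde *)/2$ into the coefficients), or running the LP bound under both dual forms and comparing, combined with the auxiliary event-based bound $|\tilde{\Pr}(Y=1) - \Pr(Y=1)| \le d_{\text{TV}}$ to control attacks that substantially shift the $Y = 1$ marginal.
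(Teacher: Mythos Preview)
Your bilinear/LP approach is natural, but the obstacle you flag in the last paragraph is not a technicality---it is where the argument actually breaks. Any expansion $ab-\tilde a\tilde b=c_1\delta_a+c_2\delta_b$ forces at least one of $c_1,c_2$ to be a $\mathcal{D}'$-quantity, so no single form of the identity gives the purely-$\mathcal{D}$ coefficients $(q_h+s_h)$ and $(p_h+r_h)$ simultaneously on the two halves of $\mathcal{X}$. Taking the best form (say $s_h\delta_p-q_h\delta_r+\tilde p\delta_s-\tilde r\delta_q$) and then controlling $|\tilde p+\tilde r-(p_h+r_h)|\le d_0$ via the flipping constraint yields
\[
|p_hs_h-q_hr_h|\le (q_h+s_h)\,d_0+(p_h+r_h)\,d_1+d_0d_1,
\]
with $d_a=\int_{h^{-1}(a)}B$; the cross term $d_0d_1$ cannot be absorbed, so you only obtain $C(h,\mathcal{D})\le d_{\mathrm{TV}}+\tfrac{d_0d_1}{\max(p_h+r_h,q_h+s_h)}$, strictly weaker than the lemma. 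Expanding purely in $\mathcal{D}$ produces the same defect as a quadratic remainder $\delta_q\delta_r-\delta_p\delta_s$. Averaging the two dual forms gives coefficients $\tfrac12(*+\tilde*)$ as you already note, and the auxiliary bound on $|\widetilde{\Pr}(Y=1)-\Pr(Y=1)|$ controls the wrong combination of $\delta$'s.

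The paper's proof avoids this by working entirely at the event level and keeping the (nonlinear) fairness constraint alive until the end. After the triangle-inequality reduction $d_{\mathrm{TV}}\ge\tfrac12\sum_{a,b,c}|\alpha_{a,b,c}|$, the key step is the substitution
\[
\beta_{a,1,c}:=\alpha_{a,1,c}+\frac{p'_{0,1,c}}{p'_{0,1,0}+p'_{0,1,1}}\bigl(\alpha_{a,0,0}+\alpha_{a,0,1}\bigr),\qquad \beta_{a,0,c}:=0,
\]
which (i) uses the flipping constraints $\sum_{b,c}\alpha_{a,b,c}=0$ to force $\beta_{a,1,0}=-\beta_{a,1,1}$, collapsing to two free variables, and (ii) turns the fairness constraint $p'_{1,1,0}p'_{0,1,1}=p'_{0,1,0}p'_{1,1,1}$ into the \emph{linear} relation $\beta_{0,1,1}=\tfrac{p+r}{q+s}\beta_{1,1,1}+\tfrac{ps-qr}{q+s}$. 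A further triangle inequality gives $\tfrac12\sum|\alpha_{a,b,c}|\ge|\beta_{1,1,1}|+|\beta_{0,1,1}|$, and minimizing this one-variable piecewise-linear function yields exactly $\tfrac{|ps-qr|}{\max(p+r,q+s)}$. The essential idea your proposal is missing is this $\beta$-substitution, which linearizes the constraint \emph{after} the reduction to event-level $\ell_1$; using fairness only once at the start (to cancel $\tilde p\tilde s$ against $\tilde q\tilde r$) discards too much structure to reach the sharp constant.
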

\begin{proof}
Let $f$ and $f'$ be density functions of $\mathcal{D}$ and $\mathcal{D}'$, respectively.
Let $A=\{x\in \mathbb{R}^n: h(x)=1\}$.
For $(a,b,c)\in \{0,1\}\times\{0,1\}\times\{0,1\}$, let $p_{a,b,c}=\Pr_{\mathcal{D}}(h(X)=a, Y=b, Z=c)$, $p'_{a,b,c}=\Pr_{\mathcal{D}'}(h(X)=a, Y=b, Z=c)$, and $\alpha_{a,b,c}= p'_{a,b,c} - p_{a,b,c}$.
Then $d_{\text{TV}}(\mathcal{D}, \mathcal{D}')$ can be lower bounded as follows.
\begingroup
\allowdisplaybreaks
\begin{align}
&d_{\text{TV}}(\mathcal{D}, \mathcal{D}')=\frac{1}{2} \sum_{(y,z)\in \{0,1\}\times\{0,1\}} \int_{\mathbb{R}^n}|f(x,y,z)-f'(x,y,z)|\diff\mu \nonumber\\
&= \frac{1}{2} \sum_{(y,z)\in \{0,1\}\times\{0,1\}} \int_{A}|f(x,y,z)-f'(x,y,z)|\diff\mu + \frac{1}{2} \sum_{(y,z)\in \{0,1\}\times\{0,1\}} \int_{\mathbb{R}^n \setminus A}|f(x,y,z)-f'(x,y,z)|\diff\mu \nonumber\\
&\overset{(i)}{\ge} \frac{1}{2} \sum_{(y,z)\in \{0,1\}\times\{0,1\}} \Big|\int_{A}\big(f(x,y,z)-f'(x,y,z)\big) \diff\mu\Big| + \frac{1}{2} \sum_{(y,z)\in \{0,1\}\times\{0,1\}} \Big|\int_{\mathbb{R}^n \setminus A}\big(f(x,y,z)-f'(x,y,z)\big) \diff\mu\Big| \nonumber\\
&= \frac{1}{2} \sum_{(a,b,c)\in \{0,1\}\times\{0,1\}\times\{0,1\}} \Big|\Pr_{\mathcal{D}}(h(X)=a, Y=b, Z=c) -\Pr_{\mathcal{D}'}(h(X)=a, Y=b, Z=c) \Big| \nonumber\\
&= \frac{1}{2} \sum_{(a,b,c)\in \{0,1\}\times\{0,1\}\times\{0,1\}} |\alpha_{a,b,c}|, \label{lowerbddTV}
\end{align}
\endgroup
where (i) comes from the triangle inequality.
Since $\mathcal{D}' \in \mathcal{S}$, we have $\mathcal{D}_X = \mathcal{D}_X'$.
This implies $\Pr_{\mathcal{D}} (h(X)=1)=\Pr_{\mathcal{D}'} (h(X)=1)$, which is equivalent to
\begin{equation}\label{fairconstraint1}
    \sum_{(b,c)\in \{0,1\}\times\{0,1\}} \alpha_{1,b,c} = \sum_{(b,c)\in \{0,1\}\times\{0,1\}} \alpha_{0,b,c} = 0.
\end{equation}
Moreover, $h$ is perfectly fair on $\mathcal{D}'$ if and only if
$$\frac{p'_{1,1,0}}{p'_{0,1,0}+p'_{1,1,0}}=\frac{p'_{1,1,1}}{p'_{0,1,1}+p'_{1,1,1}},$$
which is equivalent to
\begin{equation}\label{fairconstraint3}
p'_{1,1,0}\cdot p'_{0,1,1} = p'_{0,1,0}\cdot p'_{1,1,1}.
\end{equation}
It suffices to show that \eqref{lowerbddTV} is lower bounded by $C(h, \mathcal{D})$ under the constraints \eqref{fairconstraint1} and \eqref{fairconstraint3}.
It is not easy to analyze \eqref{lowerbddTV} directly because it is the sum of 8 unknown variables.
Hence we define $\beta_{a,b,c}$ as
$$\begin{cases}
\alpha_{a,b,c}+\frac{p'_{0,1,0}}{p'_{0,1,0}+p'_{0,1,1}}(\alpha_{1,0,0}+\alpha_{1,0,1}) \text{~~if~~} (a,b,c)= (1,1,0)\\
\alpha_{a,b,c}+\frac{p'_{0,1,1}}{p'_{0,1,0}+p'_{0,1,1}}(\alpha_{1,0,0}+\alpha_{1,0,1}) \text{~~if~~} (a,b,c)= (1,1,1)\\
\alpha_{a,b,c}+\frac{p'_{0,1,0}}{p'_{0,1,0}+p'_{0,1,1}}(\alpha_{0,0,0}+\alpha_{0,0,1}) \text{~~if~~} (a,b,c)= (0,1,0)\\
\alpha_{a,b,c}+\frac{p'_{0,1,1}}{p'_{0,1,0}+p'_{0,1,1}}(\alpha_{0,0,0}+\alpha_{0,0,1}) \text{~~if~~} (a,b,c)= (0,1,1)\\
0 \text{~~if~~} b=0, (a,c)\in \{0,1\}\times\{0,1\} \end{cases}$$
By definition of $\beta_{a,b,c}$ and \eqref{fairconstraint1}, it is clear that $\beta_{1,1,0}=-\beta_{1,1,1}$ and $\beta_{0,1,0}=-\beta_{0,1,1}$.
Then \eqref{lowerbddTV} is lower bounded as follows.
\begingroup
\allowdisplaybreaks
\begin{align}
    &\frac{1}{2} \sum_{(a,b,c)\in \{0,1\}\times\{0,1\}\times\{0,1\}} |\alpha_{a,b,c}|  \nonumber\\
    &= \frac{1}{2}\Big( |\alpha_{1,1,0}|+\frac{p'_{0,1,0}}{p'_{0,1,0}+p'_{0,1,1}}(|\alpha_{1,0,0}|+|\alpha_{1,0,1}|) \Big) + \frac{1}{2}\Big(|\alpha_{1,1,1}|+\frac{p'_{0,1,1}}{p'_{0,1,0}+p'_{0,1,1}}(|\alpha_{1,0,0}|+|\alpha_{1,0,1}|) \Big)\nonumber\\
    &~~~~ + \frac{1}{2}\Big( |\alpha_{0,1,0}|+\frac{p'_{0,1,0}}{p'_{0,1,0}+p'_{0,1,1}}(|\alpha_{0,0,0}|+|\alpha_{0,0,1}|) \Big) + \frac{1}{2} \Big(|\alpha_{0,1,1}|+\frac{p'_{0,1,1}}{p'_{0,1,0}+p'_{0,1,1}}(|\alpha_{0,0,0}|+|\alpha_{0,0,1}|) \Big)\nonumber\\
    &\overset{(ii)}{\ge} \frac{1}{2}\Big|\alpha_{1,1,0}+\frac{p'_{0,1,0}}{p'_{0,1,0}+p'_{0,1,1}}(\alpha_{1,0,0}+\alpha_{1,0,1})\Big| + \frac{1}{2}\Big|\alpha_{1,1,1}+\frac{p'_{0,1,1}}{p'_{0,1,0}+p'_{0,1,1}}(\alpha_{1,0,0}+\alpha_{1,0,1})\Big| \nonumber\\
    &~~~~+ \frac{1}{2}\Big| \alpha_{0,1,0}+\frac{p'_{0,1,0}}{p'_{0,1,0}+p'_{0,1,1}}(\alpha_{0,0,0}+\alpha_{0,0,1}) \Big| + \frac{1}{2}\Big|\alpha_{0,1,1}+\frac{p'_{0,1,1}}{p'_{0,1,0}+p'_{0,1,1}}(\alpha_{0,0,0}+\alpha_{0,0,1}) \Big| \nonumber\\
    &= \frac{1}{2} (|\beta_{1,1,0}|+|\beta_{1,1,1}|+|\beta_{0,1,0}|+|\beta_{0,1,1}|)\nonumber\\
    &= |\beta_{1,1,1}|+|\beta_{0,1,1}|,\label{betasimplebddTV}
\end{align}
\endgroup
where (ii) comes from the triangle inequality, and the last equality comes from $\beta_{1,1,0}=-\beta_{1,1,1}$ and $\beta_{0,1,0}=-\beta_{0,1,1}$.
We now show how the constraints \eqref{fairconstraint1} and \eqref{fairconstraint3} are used to find the constraint on $\beta_{1,1,1}$ and $\beta_{0,1,1}$.

A direct calculation yields
\begingroup
\allowdisplaybreaks
\begin{align*}
    &p_{1,1,0}+\beta_{1,1,0} = p_{1,1,0}+\alpha_{1,1,0}+\frac{p'_{0,1,0}(\alpha_{1,0,0}+\alpha_{1,0,1})}{p'_{0,1,0}+p'_{0,1,1}}\\
    &=p'_{1,1,0}+\frac{p'_{0,1,0}(\alpha_{1,0,0}+\alpha_{1,0,1})}{p'_{0,1,0}+p'_{0,1,1}}\\
    &=\frac{\big(p'_{0,1,0}(p'_{1,1,0}+\alpha_{1,0,0}+\alpha_{1,0,1})+p'_{1,1,0}p'_{0,1,1}\big)}{p'_{0,1,0}+p'_{0,1,1}}\\
    &\overset{\eqref{fairconstraint3}}{=} \frac{\big(p'_{0,1,0}(p'_{1,1,0}+\alpha_{1,0,0}+\alpha_{1,0,1})+p'_{0,1,0}p'_{1,1,1}\big)}{p'_{0,1,0}+p'_{0,1,1}}\\
    &=\frac{p'_{0,1,0}(p'_{1,1,0}+p'_{1,1,1}+\alpha_{1,0,0}+\alpha_{1,0,1})}{p'_{0,1,0}+p'_{0,1,1}}\\
    &=\frac{p'_{0,1,0}(p_{1,1,0}+\alpha_{1,1,0}+p_{1,1,1}+\alpha_{1,1,1}+\alpha_{1,0,0}+\alpha_{1,0,1})}{p'_{0,1,0}+p'_{0,1,1}}\\
    &\overset{\eqref{fairconstraint1}}{=}\frac{p'_{0,1,0}(p_{1,1,0}+p_{1,1,1})}{p'_{0,1,0}+p'_{0,1,1}} 
\end{align*}
\endgroup
and 
\begingroup
\allowdisplaybreaks
\begin{align*}
    &p_{0,1,0}+\beta_{0,1,0} =p_{0,1,0}+\alpha_{0,1,0}+\frac{p'_{0,1,0}(\alpha_{0,0,0}+\alpha_{0,0,1})}{p'_{0,1,0}+p'_{0,1,1}}\\
    &= p'_{0,1,0}+\frac{p'_{0,1,0}(\alpha_{0,0,0}+\alpha_{0,0,1})}{p'_{0,1,0}+p'_{0,1,1}}\\
    &= \frac{p'_{0,1,0}(p'_{0,1,0}+p'_{0,1,1}+\alpha_{0,0,0}+\alpha_{0,0,1} )}{p'_{0,1,0}+p'_{0,1,1}} \\
    &= \frac{p'_{0,1,0}(p_{0,1,0}+\alpha_{0,1,0}+p_{0,1,1}+\alpha_{0,1,1}+\alpha_{0,0,0}+\alpha_{0,0,1} )}{p'_{0,1,0}+p'_{0,1,1}} \\
    &\overset{\eqref{fairconstraint1}}{=} \frac{p'_{0,1,0}(p_{0,1,0}+p_{0,1,1} )}{p'_{0,1,0}+p'_{0,1,1}}.
\end{align*}
\endgroup
Similarly, a direct calculation yields $$p_{1,1,1}+\beta_{1,1,1} = \frac{p'_{0,1,1}(p_{1,1,0}+p_{1,1,1})}{p'_{0,1,0}+p'_{0,1,1}}$$ and $$p_{0,1,1}+\beta_{0,1,1} =\frac{p'_{0,1,1}(p_{0,1,0}+p_{0,1,1})}{p'_{0,1,0}+p'_{0,1,1}}.$$
Combining the results above, one can obtain
$$\frac{(q+\beta_{1,1,0})}{(p+\beta_{0,1,0})+(q+\beta_{1,1,0})}=\frac{(s+\beta_{1,1,1})}{(r+\beta_{0,1,1})+(s+\beta_{1,1,1})},$$
where $p = p_{0,1,0}, q = p_{1,1,0}, r = p_{0,1,1}, s = p_{1,1,1}$.
Using the fact that $\beta_{1,1,0}=-\beta_{1,1,1}$ and $\beta_{0,1,0}=-\beta_{0,1,1}$, the constraint above can be written as
\begin{equation}\label{simplebetaconstraint2}
    \frac{(q-\beta_{1,1,1})}{(p-\beta_{0,1,1})+(q-\beta_{1,1,1
    })}=\frac{(s+\beta_{1,1,1})}{(r+\beta_{0,1,1})+(s+\beta_{1,1,1})}.
\end{equation}

Solving \eqref{simplebetaconstraint2} for $\beta_{0,1,1}$, we get

$$\beta_{0,1,1} = \frac{p+r}{q+s}\beta_{1,1,1} + \frac{ps-qr}{q+s}.$$
Then \eqref{betasimplebddTV} is equal to 
\begin{align}
    |\beta_{1,1,1}|+\left|\frac{p+r}{q+s}\beta_{1,1,1} + \frac{ps-qr}{q+s}\right|, \label{betasimplebddTV_new}
\end{align}
which is a piecewise linear function of $\beta_{1,1,1}$.
Hence the lower bound on \eqref{betasimplebddTV_new} can be easily found as follows.
\begin{itemize}
\item Case 1. $ps-qr \ge 0, \frac{p+r}{q+s} \ge 1$: \eqref{betasimplebddTV_new} achieves the minimum of $\frac{ps-qr}{p+r}$ at $\beta_{1,1,1} = \frac{qr-ps}{p+r}$ ($\beta_{0,1,1} = 0$).
\item Case 2. $ps-qr \ge 0, \frac{p+r}{q+s} < 1$: \eqref{betasimplebddTV_new} achieves the minimum of $\frac{ps-qr}{q+s}$ at $\beta_{1,1,1} = 0$ ($\beta_{0,1,1} = \frac{ps-qr}{q+s}$).
\item Case 3. $ps-qr < 0, \frac{p+r}{q+s} \ge 1$: \eqref{betasimplebddTV_new} achieves the minimum of $\frac{qr-ps}{p+r}$ at $\beta_{1,1,1} = \frac{qr-ps}{p+r}$ ($\beta_{0,1,1} = 0$).
\item Case 4. $ps-qr < 0, \frac{p+r}{q+s} < 1$: \eqref{betasimplebddTV_new} achieves the minimum of $\frac{qr-ps}{q+s}$ at $\beta_{1,1,1} = 0$ ($\beta_{0,1,1} = \frac{ps-qr}{q+s}$).
\end{itemize}
From the above cases, one can conclude that \eqref{betasimplebddTV_new} is lower bounded by
\begin{align*}
     \frac{|ps-qr|}{\max\{p+r, q+s\}} = C(h, \mathcal{D}).
\end{align*}
\end{proof}
For any $\mathcal{D}' \in \Lambda_{0}(h)$, $h$ is perfectly fair on $\mathcal{D}'$, so we have $d_{\text{TV}}(\mathcal{D}, \mathcal{D}') \ge C(h, \mathcal{D})$ by Lem.~\ref{lem:1}.
Then, the lower bound on $d^\star_{\text{TV}}(h)$ in Thm.~\ref{thm:1} can be obtained as follows:
\begin{align}\label{eq:8}
    d^\star_{\text{TV}}(h) =  \inf_{\mathcal{D}'\in \Lambda_{0}(h)}~ d_{\text{TV}}(\mathcal{D}, \mathcal{D}') \ge C(h, \mathcal{D}).
\end{align}
We now interpret our lower bound.
Observe that $C(h, \mathcal{D})$ equals
$$\frac{\Delta(h, \mathcal{D})\cdot \min\{\Pr_{\mathcal{D}}(Y=1, Z=0), \Pr_{\mathcal{D}}(Y=1, Z=1)\}}{\max\{\text{TPR}_{h,\mathcal{D}}, \text{FNR}_{h,\mathcal{D}} \}},$$ 
where $\text{TPR}_{h,\mathcal{D}}$ denotes the true positive rate $\Pr_{\mathcal{D}}(h(X)=1 | Y=1)$, and $\text{FNR}_{h,\mathcal{D}}$ denotes the false negative rate $\Pr_{\mathcal{D}}(h(X)=0 | Y=1)$.
Then $C(h, \mathcal{D})$ is proportional to the unfairness gap $\Delta(h, \mathcal{D})$. 
Intuitively, this makes sense as one must apply heavy distortion to the data distribution to make $h$ look perfectly fair if $h$ was highly unfair on the original data.
The other term $\min\{\Pr_{\mathcal{D}}(Y=1, Z=0), \Pr_{\mathcal{D}}(Y=1, Z=1)\}$ in the numerator captures how trivial the data poisoning task is.
For instance, if either of the two terms is close to $0$, then it becomes much easier to satisfy equal opportunity by making very little perturbation.

We now show how to construct the distribution $\mathcal{D}'$ that matches the lower bound in \eqref{eq:6}.
Given the distribution $\mathcal{D}$ with the the density function $f$, and the target model $h$, we construct the distribution $\text{Fair}_h(\mathcal{D})$ with the density function $f_h(x,y,z)$ defined as follows.

\noindent \textbf{Case 1.} $p_h+r_h\ge q_h+s_h$, $\frac{q_h}{p_h}\ge \frac{s_h}{r_h}$: Define

$$f_h(x,y,z):= f(x,y,z) + \mathbb{1}(h(x)=1, y=1)\cdot(2z-1)\cdot\frac{q_hr_h-p_hs_h}{(p_h+r_h)q_h} f(x,1,0).$$
\textbf{Case 2.} $p_h+r_h\ge q_h+s_h$, $\frac{q_h}{p_h}<\frac{s_h}{r_h}$: Define 

$$f_h(x,y,z):=f(x,y,z) + \mathbb{1}(h(x)=1, y=1)\cdot(1-2z)\cdot \frac{p_hs_h-q_hr_h}{(p_h+r_h)s_h} f(x,1,1).$$
\textbf{Case 3.} $p_h+r_h< q_h+s_h$, $\frac{q_h}{p_h}\ge \frac{s_h}{r_h}$: Define

$$f_h(x,y,z):=f(x,y,z) + \mathbb{1}(h(x) \neq 1, y=1)\cdot(1-2z)\cdot \frac{q_hr_h-p_hs_h}{(q_h+s_h)r_h} f(x,1,1).$$
\textbf{Case 4.} $p_h+r_h< q_h+s_h$, $\frac{q_h}{p_h}< \frac{s_h}{r_h}$: Define
$$f_h(x,y,z):=f(x,y,z) + \mathbb{1}(h(x) \neq 1, y=1)\cdot(2z-1)\cdot \frac{p_hs_h-q_hr_h}{(q_h+s_h)p_h} f(x,1,0).$$

The following lemma shows that $\text{Fair}_h(\mathcal{D})$ satisfies desired properties.

\begin{lemma}\label{lem:2} 
The following properties hold: (i) $\text{Fair}_h(\mathcal{D}) \in \mathcal{S}$, (ii) $h$ is perfectly fair on $\text{Fair}_h(\mathcal{D})$, and (iii) $d_{\text{TV}}(\mathcal{D}, \text{Fair}_h(\mathcal{D})) = C(h, \mathcal{D})$.
\end{lemma}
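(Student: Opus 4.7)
The plan is to verify each of the three properties by direct computation, focusing on Case 1 and noting that Cases 2--4 are obtained from it by the symmetries $z=0 \leftrightarrow z=1$ and $\{h(x)=1\} \leftrightarrow \{h(x)\neq 1\}$. Throughout, write $c := \frac{q_h r_h - p_h s_h}{(p_h+r_h)q_h} \geq 0$, so that the Case 1 construction adds $-c\,f(x,1,0)$ to $f_h(x,1,0)$ and $+c\,f(x,1,0)$ to $f_h(x,1,1)$ whenever $h(x)=1$, and leaves every other density value unchanged.

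For property (i), the two nonzero modifications have equal magnitude and opposite sign at every $x$ with $h(x)=1$, so summing over $(y,z)\in \mathcal{Y}\times\mathcal{Z}$ recovers $f_X(x)$ pointwise. This simultaneously gives $\mathcal{D}'_X = \mathcal{D}_X$ and $\int f_h \diff\mu = 1$. Non-negativity of $f_h$ reduces to $c \leq 1$, which after clearing denominators is $-p_h s_h \leq p_h q_h$, trivially true; each $\mathcal{D}'_{X|Y=y,Z=z}$ inherits a density from $f_h$, so $\text{Fair}_h(\mathcal{D})\in \mathcal{S}$.

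For property (ii), I will integrate $f_h$ over $\{x : h(x)=1\}$ and over $\mathcal{X}$ to obtain the relevant joint probabilities under $\mathcal{D}' := \text{Fair}_h(\mathcal{D})$. A short calculation yields $\Pr_{\mathcal{D}'}(h(X)=1, Y=1, Z=0) = \tfrac{p_h(q_h+s_h)}{p_h+r_h}$ and $\Pr_{\mathcal{D}'}(Y=1, Z=0) = \tfrac{p_h(p_h+q_h+r_h+s_h)}{p_h+r_h}$, with $r_h$ in place of $p_h$ for the $z=1$ analogues. Both true positive rates therefore collapse to $\frac{q_h+s_h}{p_h+q_h+r_h+s_h}$, establishing equal opportunity of $h$ on $\text{Fair}_h(\mathcal{D})$. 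For property (iii), only the $(y,z)\in \{(1,0),(1,1)\}$ integrands contribute to $d_{\text{TV}}$, each with $L^1$ mass $c\int_{\{h(x)=1\}} f(x,1,0)\diff\mu = c\, q_h = \tfrac{q_h r_h - p_h s_h}{p_h+r_h}$; dividing the sum by two gives exactly $C(h,\mathcal{D})$ under the Case 1 hypotheses $p_h+r_h \geq q_h+s_h$ and $q_h r_h \geq p_h s_h$.

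For Cases 2--4 the bookkeeping is a relabelling: Case 2 reverses the direction of transfer within $\{h(x)=1, y=1\}$ (and uses $s_h$ in the denominator instead of $q_h$) to handle $q_h r_h < p_h s_h$; Cases 3 and 4 transfer mass within $\{h(x)\neq 1, y=1\}$, which is the correct choice when $q_h+s_h$ dominates $p_h+r_h$. In each case the three steps proceed identically, with the coefficient engineered so that the same TPR collapse occurs in step (ii). The main obstacle is the algebra of (ii): recognizing the common factor $p_h/(p_h+r_h)$ (respectively $r_h/(p_h+r_h)$) shared by numerator and denominator is what makes the TPRs coincide, and verifying this cleanly across all four cases requires careful bookkeeping, though no conceptual novelty beyond the observation already exploited in the proof of Lemma~\ref{lem:1}.
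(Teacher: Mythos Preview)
Your proposal is correct and follows essentially the same route as the paper: both restrict attention to Case~1, verify (i)--(iii) by direct computation (marginal preservation, non-negativity via $c\le 1$, equal TPRs of $\tfrac{q_h+s_h}{p_h+q_h+r_h+s_h}$, and the $d_{\text{TV}}$ integral collapsing to $c\,q_h$), and appeal to symmetry for the remaining cases. The only cosmetic difference is that the paper spells out the measurability of $f_h$ a bit more explicitly when checking that the conditional densities exist.
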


\begin{proof}
We provide the proof for the case where $p_h+r_h\ge q_h+s_h$ and $\frac{q_h}{p_h}\ge \frac{s_h}{r_h}$, and other cases can be handled in a similar way.

(i) $\text{Fair}_h(\mathcal{D}) \in \mathcal{S}$: 
It suffices to show that $\text{Fair}_h(\mathcal{D})_X=\mathcal{D}_X$, $\text{Fair}_h(\mathcal{D})$ is a probability distribution over $\mathcal{X}\times\mathcal{Y}\times\mathcal{Z}$, and $\text{Fair}_h(\mathcal{D})_{X|Y=y,Z=z}$ has a density function with respect to the Lebesgue measure $\mu$ for all $(y,z) \in \mathcal{Y}\times \mathcal{Z}$.
Then $f_h(x,y,z)$ can be computed as follows:
\begin{align*}
f_h(x,y,z)=\begin{cases}
\frac{p_hq_h+p_hs_h}{(p_h+r_h)q_h} f(x,1,0) &\text{~if~} h(x)=1, y=1, z=0\\
f(x,1,1) + \frac{q_hr_h-p_hs_h}{(p_h+r_h)q_h} f(x,1,0) &\text{~if~} h(x)=1, y=1, z=1\\
f(x,y,z) &\text{~otherwise}
\end{cases}.
\end{align*}

Then, one can check that $${f_h}_X(x)=f_h(x,0,0)+f_h(x,0,1)+f_h(x,1,0)+f_h(x,1,1)=f(x,0,0)+f(x,0,1)+f(x,1,0)+f(x,1,1)=f_X(x),$$ which means $\text{Fair}_h(\mathcal{D})_X=\mathcal{D}_X$.
Moreover, one can observe that $f_h$ is a nonnegative function, and 
$$\sum_{(y,z)\in \{0,1\}\times\{0,1\}} \int_{\mathbb{R}^n}f_h(x,y,z)\diff\mu = \int_{\mathbb{R}^n}{f_h}_X(x)\diff\mu = \int_{\mathbb{R}^n}{f}_X(x)\diff\mu = 1.$$
Hence $\text{Fair}_h(\mathcal{D})$ is a probability distribution over $\mathcal{X}\times\mathcal{Y}\times\mathcal{Z}$.
Let $A=\{x\in \mathbb{R}^n\colon\ h(x)=1\}$.
Since $h$ is a Lebesgue measurable function, $A$ is a Lebesgue measurable set, so $\mathbb{1}(h(x)=1)$ is a Lebesgue measurable function.
For each $(y,z)\in \{0,1\}\times \{0,1\}$, $f_{X|Y=y, Z=z}(x|y, z)$ is a Lebesgue measurable function because it is the density function of $\mathcal{D}_{X|Y=y,Z=z}$.
Then, one can check that ${f_h}_{X|Y=y, Z=z}(x|y, z) = \frac{1}{\Pr_{\text{Fair}_h(\mathcal{D})}(Y=y, Z=z)} f_h(x,y,z)$ is a Lebesgue measurable function, using the fact that the sum and product of Lebesgue measurable functions are Lebesgue measurable.
Hence $\text{Fair}_h(\mathcal{D})_{X|Y=y,Z=z}$ has a density function with respect to the Lebesgue measure $\mu$ for all $(y,z) \in \mathcal{Y}\times \mathcal{Z}$.
Therefore, we can conclude that $\text{Fair}_h(\mathcal{D})\in \mathcal{S}$.

(ii) $h$ is perfectly fair on $\text{Fair}_h(\mathcal{D})$:
A direct calculation yields $$\Pr_{\text{Fair}_h(\mathcal{D})}(h(X)=0, Y=1, Z=0) = \int_{A^c} f_h(x,1,0) \diff{\mu} = \int_{A^c} f(x,1,0) \diff{\mu} =p_h$$ and $$\Pr_{\text{Fair}_h(\mathcal{D})}(h(X)=1, Y=1, Z=0) =  \int_A f_h(x,1,0) \diff{\mu} = \frac{p_hq_h+p_hs_h}{(p_h+r_h)q_h} \int_A f(x,1,0) \diff{\mu} = \frac{p_hq_h+p_hs_h}{p_h+r_h}.$$
Similarly, one can get $$\Pr_{\text{Fair}_h(\mathcal{D})}(h(X)=0, Y=1, Z=1) = r_h$$ and $$\Pr_{\text{Fair}_h(\mathcal{D})}(h(X)=1, Y=1, Z=1) = \frac{r_hq_h+r_hs_h}{p_h+r_h}.$$
Hence 
$$\Pr_{\text{Fair}_h(\mathcal{D})}(h(X)=1| Y=1, Z=0) = \Pr_{\text{Fair}_h(\mathcal{D})}(h(X)=1| Y=1, Z=1) = \frac{q_h+s_h}{p_h+q_h+r_h+s_h},$$
and this implies that $h$ is perfectly fair on $\text{Fair}_h(\mathcal{D})$.

(iii) $d_{\text{TV}}(\mathcal{D}, \text{Fair}_h(\mathcal{D})) = C(h, \mathcal{D})$:
Using the fact that $f_h(x,y,z)=f(x,y,z)$ for $h(x) \neq 1$ or $y\neq 1$, we get
\begin{align}
    d_{\text{TV}}(\mathcal{D}, \text{Fair}_h(\mathcal{D}))&= \frac{1}{2} \sum_{(y,z)\in \mathcal{Y}\times\mathcal{Z}} \int_{\mathbb{R}^n}\left|f(x,y,z)-f_h(x,y,z)\right| \diff\mu\nonumber\\
    &= \frac{1}{2} \sum_{z\in \{0,1\} } \int_{A}\left|f(x,1,z)-f_h(x,1,z)\right| \diff\mu. \label{eq:9}
\end{align}    
By a direct calculation, \eqref{eq:9} is equal to
\begingroup
\allowdisplaybreaks
\begin{align*}    
    &\frac{1}{2} \sum_{z\in \{0,1\} } \int_{A}\left|(2z-1)\frac{q_hr_h-p_hs_h}{(p_h+r_h)q_h} f(x,1,0)\right| \diff\mu\\
    &= \frac{1}{2} \sum_{z\in \{0,1\} } \int_{A}\frac{q_hr_h-p_hs_h}{(p_h+r_h)q_h} f(x,1,0) \diff\mu\\
    &= \frac{1}{2} \sum_{z\in \{0,1\} } \frac{q_hr_h-p_hs_h}{(p_h+r_h)q_h} \cdot q_h = \frac{q_hr_h-p_hs_h}{p_h+r_h} = C(h, \mathcal{D}).
\end{align*}
\endgroup
\end{proof}

Let us illustrate the density function $f_h$ of $\text{Fair}_h(\mathcal{D})$ for the case where $p_h+r_h\ge q_h+s_h$ and $\frac{q_h}{p_h}\ge \frac{s_h}{r_h}$.
If $h(x) \neq 1$ or $y\neq 1$, then the density function remains the same, \emph{i.e.,} $f_h = f$.
If $h(x)=1$ and $y=1$, then $f_h(x,1,0)=(1-\alpha) f(x,1,0)$ and $f_h(x,1,1)=f(x,1,1)+\alpha f(x,1,0)$, where $\alpha=\frac{q_hr_h-p_hs_h}{(p_h+r_h)q_h}$.
This can be interpreted as $\alpha$ fraction of the density at $(x,1,0)$ is transported to $(x,1,1)$.
In other words, this data distribution can be realized by flipping the $Z$ value with probability $\alpha$ when $X=x$, $Y=1$ and $h(x) = 1$.
This implies that pure $Z$-flipping attack is the optimal way of perturbing data distribution to make a target classifier look perfectly fair, and we will see a similar attack algorithm for the empirical risk case in Sec.~\ref{sec:5}.

\begin{remark}[Connection with Theorem~1 in~\cite{Wang2020Robust}]\label{rmk:1}
Theorem~1 in~\cite{Wang2020Robust} provides the lower bound on $d_{\text{TV}}(\mathcal{D}_{Z=z}, \mathcal{D}'_{Z=z})$ for each $z\in \mathcal{Z}$ when $h$ is perfectly fair on $\mathcal{D}'$.
However, they did not provide an explicit construction of $\mathcal{D}'$ that matches the bound.
Our construction scheme can be used to match their bound for certain cases, which we detail in Appendix~\ref{sec:B}.
\end{remark}

\subsection{Upper bound on $d^\star_{\text{TV}}(h)$}\label{sec:4:b}
By definition, $d^\star_{\text{TV}}(h)$ is upper bounded by $d_{\text{TV}}(\mathcal{D}, \mathcal{D}')$ for any $\mathcal{D}'\in \Lambda_{0}(h)$.
Hence we provide an upper bound on $d^\star_{\text{TV}}(h)$ by constructing a specific distribution $\mathcal{D}'$ that belongs to $\Lambda_{0}(h)$.
The distribution $\text{Fair}_h(\mathcal{D})$ defined in Sec.~\ref{sec:4:a} makes $h$ look perfectly fair with the minimum amount of data perturbation.
Assume a hypothetical scenario where $h$ is the \emph{only} perfectly fair model in the hypothesis class $\mathcal{H}$ on $\text{Fair}_h(\mathcal{D})$.
Then, $\mathcal{A}_{0}(\text{Fair}_h(\mathcal{D}))=\{h\}$ holds true.
So we get $\text{Fair}_h(\mathcal{D})\in \Lambda_{0}(h)$, and $d^\star_{\text{TV}}(h)$ could be upper bounded by $d_{\text{TV}}(\mathcal{D}, \text{Fair}_h(\mathcal{D}))$, which is equal to $C(h, \mathcal{D})$ by Lem.~\ref{lem:2}.
Unfortunately, this assumption does \emph{not} hold true by the following lemma; there are infinitely many perfectly fair classifiers (see Fig.~\ref{fig:3} for visualization).

\begin{figure}[t]
\centering
\includegraphics[width=0.5\columnwidth]{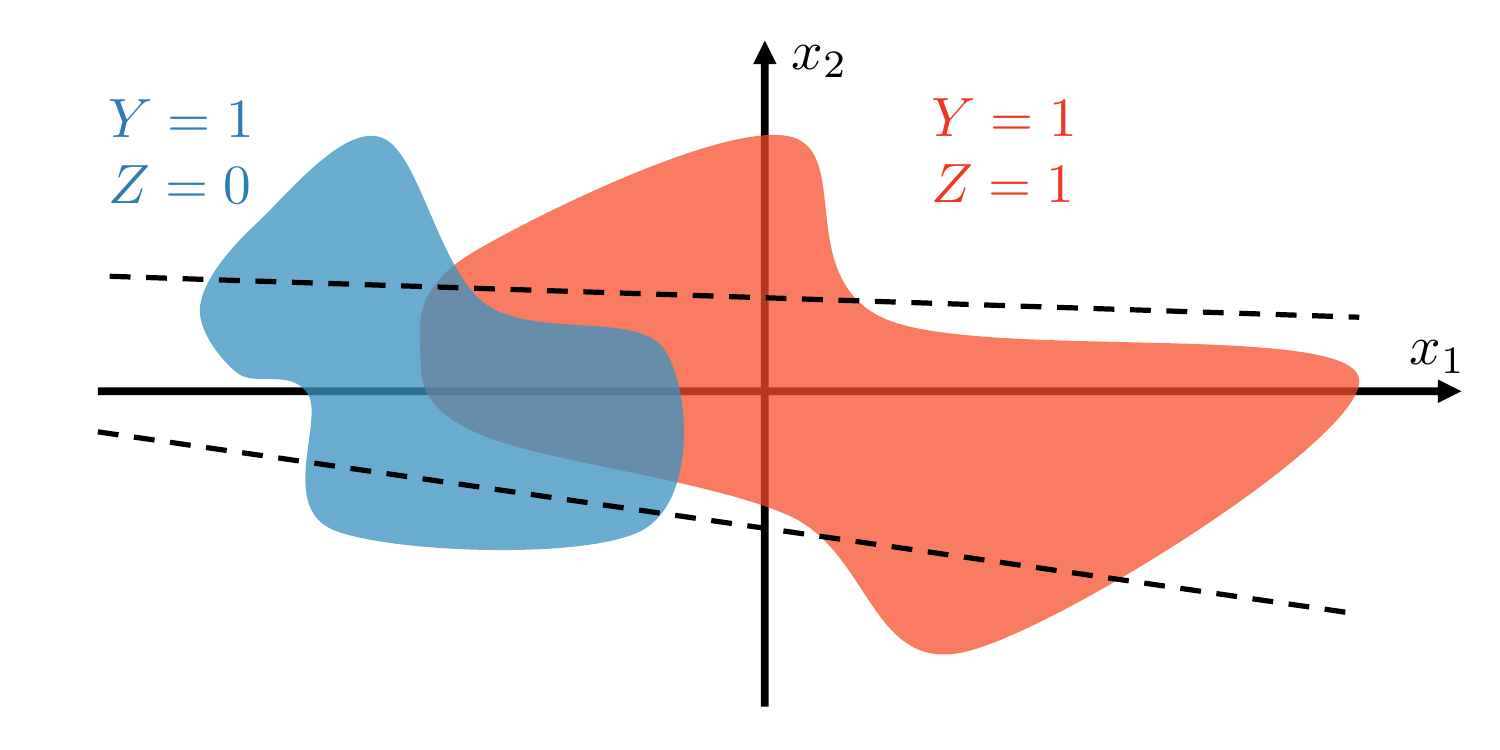}
\caption{Let $\mathcal{X}=\mathbb{R}^2, \mathcal{Y}=\mathcal{Z}=\{0,1\}$.
We consider a distribution $\mathcal{D}$ where samples with $Y=1,Z=0$ are uniformly distributed with density of 1 on the blue region, and samples with $Y=1,Z=1$ are uniformly distributed with density of 1 on the red region.
By Lem.~\ref{lem:3}, there exist infinitely many linear classifiers that are perfectly fair such as dotted lines.}\label{fig:3}
\end{figure}

\begin{lemma}\label{lem:3}
Let $\mathcal{X}= \mathbb{R}^n$, $\mathcal{Y}=\{0,1\}$, $\mathcal{Z}=\{0,1, \dots, d-1\}$. Let $\mathcal{D}$ be a probability distribution over $\mathcal{X}\times\mathcal{Y}\times\mathcal{Z}$ whose conditional distribution $\mathcal{D}_{X|Y=y,Z=z}$ has a density function with respect to the Lebesgue measure $\mu$ for all $(y,z)\in \mathcal{Y}\times \mathcal{Z}$.
If $n \geq d+\mathbb{1}(d\geq 3)$, then there exist infinitely many linear classifiers that are perfectly fair on $\mathcal{D}$.
Moreover, for all $x\in \mathcal{X}$, there exist at least one perfectly fair linear classifier whose decision boundary passes through $x$.
\end{lemma}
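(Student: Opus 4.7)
The plan is to apply the Borsuk--Ulam theorem in its odd-mapping form. Fix an arbitrary base point $x_{0}\in\mathcal{X}$ and parameterize the hyperplanes through $x_{0}$ by their unit normal $w\in S^{n-1}$, so that the associated linear classifier is $h_{w}(x):=\mathbb{1}(w^{\top}(x-x_{0})>0)$. I would define
\[
\Phi: S^{n-1}\to[0,1]^{d},\qquad \Phi(w)_{z}:=\Pr_{\mathcal{D}}\bigl(w^{\top}(X-x_{0})>0 \mid Y=1,\,Z=z\bigr),\quad z\in\mathcal{Z}.
\]
The density assumption on each conditional $\mathcal{D}_{X|Y=1,Z=z}$ implies every hyperplane has zero conditional mass. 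Two consequences follow immediately: as $w$ varies, the indicator $\mathbb{1}(w^{\top}(x-x_0)>0)$ converges pointwise off a measure-zero hyperplane, so by dominated convergence $\Phi$ is continuous; and $\Phi(-w)_{z}=1-\Phi(w)_{z}$ for every $z$.

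Next I would set
\[
\Psi(w):=\bigl(\Phi(w)_{1}-\Phi(w)_{0},\,\ldots,\,\Phi(w)_{d-1}-\Phi(w)_{0}\bigr)\in\mathbb{R}^{d-1}.
\]
The identity $\Phi(-w)_{z}=1-\Phi(w)_{z}$ gives $\Psi(-w)=-\Psi(w)$, so $\Psi$ is a continuous odd map on $S^{n-1}$. The odd-mapping corollary of Borsuk--Ulam (pad $\Psi$ with zero coordinates to land in $\mathbb{R}^{n-1}$ and invoke the standard version) produces a zero $w^{\star}\in S^{n-1}$ whenever $d-1\leq n-1$, which is implied by the hypothesis $n\geq d+\mathbb{1}(d\geq 3)$. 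At $w^{\star}$ the values $\Phi(w^{\star})_{0},\ldots,\Phi(w^{\star})_{d-1}$ coincide, so $h_{w^{\star}}$ is a perfectly fair linear classifier whose decision hyperplane passes through $x_{0}$. This establishes the ``moreover'' clause.

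The ``infinitely many'' clause then follows by a short contradiction: if only finitely many perfectly fair linear classifiers $h^{(1)},\ldots,h^{(m)}$ existed, their decision hyperplanes $H^{(1)},\ldots,H^{(m)}$ would be a finite union of $(n-1)$-dimensional affine subspaces of $\mathbb{R}^{n}$ and hence have Lebesgue measure zero. Picking $x_{0}\in\mathbb{R}^{n}\setminus\bigcup_{i}H^{(i)}$ and applying the construction above yields a fair classifier whose decision boundary contains $x_{0}$ and therefore cannot equal any $h^{(i)}$, a contradiction.

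In my view the main technical obstacle is setting up the topological machinery cleanly: verifying rigorously that $\Phi$ is continuous (via dominated convergence applied to indicator functions of half-spaces, legitimate because each boundary hyperplane has zero conditional mass), confirming the antipodal symmetry $\Phi(-w)_{z}=1-\Phi(w)_{z}$, and invoking Borsuk--Ulam in a form that covers the case when the target dimension $d-1$ may be strictly smaller than the source dimension $n-1$ (handled by the zero-padding trick). Once these pieces are in place, the remainder of the argument is a direct topological existence statement transported into the fairness setting. A small subtlety worth noting is that the Borsuk--Ulam step above actually only requires $n\geq d$, so the stated threshold $n\geq d+\mathbb{1}(d\geq 3)$ is sufficient but not sharp for this line of proof; an author who prefers a constructive/inductive argument---adding one sensitive group at a time and rotating within an extra free dimension---would naturally recover the stated bound.
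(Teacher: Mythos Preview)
Your proposal is correct and in the same topological spirit as the paper's proof, but the execution differs in a way that is worth highlighting. The paper treats $d=2$ and $d\ge 3$ separately: for $d=2$ it parameterizes half-planes through $x_{0}$ by an angle $\theta$ and applies the intermediate value theorem to $H(\theta)=F(\theta)-G(\theta)$ using $H(0)=-H(\pi)$; for $d\ge 3$ it restricts the normal direction to the sphere $S^{d}$ sitting in the first $d{+}1$ coordinates and applies Borsuk--Ulam to the map $g(\omega)=(F_{0}(\omega),\ldots,F_{d-1}(\omega))\in\mathbb{R}^{d}$, obtaining a point with $g(\omega)=g(-\omega)=(\tfrac12,\ldots,\tfrac12)$. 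You instead work on the full sphere $S^{n-1}$ and pass to the \emph{differences} $\Psi(w)=(\Phi(w)_{1}-\Phi(w)_{0},\ldots,\Phi(w)_{d-1}-\Phi(w)_{0})\in\mathbb{R}^{d-1}$, which is genuinely odd and to which the odd-mapping corollary of Borsuk--Ulam applies directly. This has two pleasant consequences: it unifies the two cases (the $d=2$ instance of your argument \emph{is} the intermediate value theorem), and it shaves one dimension off the requirement, since you only need the conditional TPRs to be equal rather than all equal to $\tfrac12$; as you correctly observe, your argument runs whenever $n\ge d$, so the paper's threshold $n\ge d+\mathbb{1}(d\ge 3)$ is not sharp for this line of proof. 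Your ``infinitely many'' step, phrased as a contradiction using that a finite union of hyperplanes has Lebesgue measure zero, is essentially the paper's inductive construction compressed into one sentence.
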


\begin{proof}
Case 1. $d=2$:
For a fixed $x_0 \in \mathbb{R}^n$ $(n\ge 2)$, we define the linear classifier $h_{\theta, x_0}$ parametrized by $\theta\in [0, 2\pi)$ as follows:
$$h_{\theta, x_0}(x)=\begin{cases} 1 \text{~~if~~} (x-x_0)^T\cdot(\cos\theta, \sin\theta,0,\dots,0)\ge 0\\
0 \text{~~o.w.}\end{cases}.$$
Note that the decision boundary of this linear classifier contains $x_0$.
Let $F(\theta)=\Pr(h_{\theta,x_0}(X)=1~|~Y=1, Z=0)$, $G(\theta)=\Pr(h_{\theta,x_0}(X)=1~|~Y=1, Z=1)$, and $H(\theta) = F(\theta) - G(\theta)$.
Then $F$ and $G$ are continuous on $[0, 2\pi)$ because $\mathcal{D}_{X|Y=y,Z=z}$ has a density function with respect to the Lebesgue measure $\mu$ for each $(y,z)\in \mathcal{Y}\times \mathcal{Z}$.
Hence $H=F-G$ is also continuous on $[0, 2\pi)$.
It is clear that $F(\theta)+F(\theta+\pi)=1$ and $G(\theta)+G(\theta+\pi)=1$.
Since $H(0)=F(0)-G(0)=(1-F(\pi))-(1-G(\pi))=-H(\pi)$, we have $H(0)H(\pi)\le 0$.
So $H(\theta_{x_0})=0$ for some $\theta_{x_0}\in [0,\pi]$ by the intermediate value theorem, which means $F(\theta_{x_0})=G(\theta_{x_0})$.
Then $h_{\theta_{x_0}, x_0}$ is perfectly fair on $\mathcal{D}$, and its decision boundary contains $x_0$.
We just showed that, for any $x\in \mathbb{R}^n$, there exist a perfectly fair linear classifier $h_{\theta_x, x}$ whose decision boundary contains $x$, which we call the property ($\star$).
However, we cannot conclude that there exist infinitely many perfectly fair linear classifiers yet, because $s\neq t~(\in \mathbb{R}^n)$ does not guarantee that $h_{\theta_s, s} \neq h_{\theta_t, t}$.
Indeed, $h_{\theta_{s}, s}$ is equal to $h_{\theta_{t}, t}$ if and only if
\begin{align}\label{equalcondition}
 \theta_{s}= \theta_{t} \text{~~and~~} (s-t)^T \cdot (\cos\theta_{s}, \sin\theta_{s}, 0,\dots, 0) = 0.
\end{align}

We now prove that there exist infinitely many perfectly fair linear classifiers by constructing $\{h_{\theta_{x_i}, x_i}\}_{i=1}^{\infty}$ inductively.
Let $x_1= (0,\dots, 0)\in \mathbb{R}^n$.
Using the property ($\star$), we can find $h_{\theta_{x_1}, x_1}$ that is perfectly fair on $\mathcal{D}$.
In the $k$-th step for $k\ge 2$, we can pick some $x_k\in \mathbb{R}^n\setminus \big( \bigcup_{1\le i \le k-1} \{x\in \mathbb{R}^n\colon\ (x-x_i)^T \cdot (\cos\theta_{x_i}, \sin\theta_{x_i}, 0,\dots, 0) = 0\}\big)$.
This is possible because the Lebesgue measure of $\big( \bigcup_{1\le i \le k-1} \{x\in \mathbb{R}^n\colon\ (x-x_i)^T \cdot (\cos\theta_{x_i}, \sin\theta_{x_i}, 0,\dots, 0) = 0\}\big)$, the countable union of hyperplanes, is zero.
Then we can find $h_{\theta_{x_k}, x_k}$ that is perfectly fair on $\mathcal{D}$ using the property ($\star$).
We now show that $h_{\theta_{x_k}, x_k}\notin \{h_{\theta_{x_1}, x_1}, \dots, h_{\theta_{x_{k-1}}, x_{k-1}}\}$. Toward a contradiction, suppose that $h_{\theta_{x_k}, x_k}=h_{\theta_{x_i}, x_i}$ for some $i\in \{1, \dots, k-1\}$.
Then $\theta_{x_k}= \theta_{x_i}$ and $(x_k-x_i)^T \cdot (\cos\theta_{x_i}, \sin\theta_{x_i}, 0,\dots, 0) = 0$ by \eqref{equalcondition}.
This contradicts the inductive assumption that $x_k\notin \big( \bigcup_{1\le i \le k-1} \{x\in \mathbb{R}^n\colon\ (x-x_i)^T \cdot (\cos\theta_{x_i}, \sin\theta_{x_i}, 0,\dots, 0) = 0\}\big)$.
Therefore, in the $k$-th step, we can find  $h_{\theta_{x_k}, x_k}$ that is different from  $h_{\theta_{x_1}, x_1}, \dots, h_{\theta_{x_{k-1}}, x_{k-1}}$ and perfectly fair on $\mathcal{D}$.
Iterating above steps, we can find the countable set $\{h_{\theta_{x_i}, x_i}\}_{i=1}^{\infty}$ whose elements are perfectly fair on $\mathcal{D}$.

Case 2. $d\ge 3$: Suppose $n\ge d+1$.
The argument using the intermediate value theorem cannot be extended to the case $d\ge 3$, because we need to equalize more than two functions.
The Borsuk-Ulam theorem~\cite{Hatcher2002AT}, provided below as Lem.~\ref{lem:4}, can be applied to this case.

\begin{lemma}[Borsuk-Ulam theorem~\cite{Hatcher2002AT}]\label{lem:4} Let $S^n=\{x\in \mathbb{R}^{n+1}:\|x\|=1\}$ for $n\ge 1$. If $g: S^n \rightarrow R^n$ is continuous, then there exists $x\in S^n$ such that $g(-x)=g(x)$.
\end{lemma}

Let $S^d=\{x\in \mathbb{R}^{d+1}:\|x\|=1\}$.
Define the natural embedding $\iota: S^d \rightarrow \mathbb{R}^n$ by $\iota\big((x_1, \dots, x_{d+1})\big)=(x_1, \dots, x_{d+1}, 0, \dots, 0)$.
For any $x_0\in \mathbb{R}^n$, define the linear classifier parametrized by $\omega \in S^d$ as follows:
$$h_{\omega, x_0}(x)=\begin{cases} 1 \text{~~if~~} (x-x_0)^T\cdot \iota (\omega) \ge 0\\
0 \text{~~o.w.}\end{cases}.$$
Let $F_i(\omega)=\Pr(h_{\omega, x_0}(X)=1~|~Y=1, Z=i)$ for $0\le i \le d-1$.
Define $g: S^d \rightarrow \mathbb{R}^d$ by $g(w)= \big(F_0(\omega), F_1(\omega), \dots, F_{d-1}(\omega)\big)$.
Since $\mathcal{D}_{X|Y=y,Z=z}$ has a density function with respect to the Lebesgue measure $\mu$ for each $(y,z)\in \mathcal{Y}\times \mathcal{Z}$, $g$ is continuous.
Hence $g(\omega_{x_0})=g(-\omega_{x_0})$ for some $\omega_{x_0}\in S^d$ by the Borsuk-Ulam theorem.
By construction, $g(\omega)+g(-\omega)=(1,\dots, 1)$ for all $\omega\in S^d$.
Thus, $g(\omega_{x_0})=(\frac{1}{2}, \dots, \frac{1}{2})$, which means $h_{\omega_{x_0}}$ is perfectly fair on $\mathcal{D}$.
We just showed that, for any $x\in \mathbb{R}^n$, one can find a perfectly fair linear classifier $h_{\omega_x, x}$ whose decision boundary contains $x$.
Then one can find the countable set $\{h_{\omega_{x_i}, x_i}\}_{i=1}^{\infty}$ whose elements are perfectly fair on $\mathcal{D}$, by using the similar inductive argument made in the case $d=2$.
\end{proof}

\begin{remark}
While Lem.~\ref{lem:3} is stated based on equal opportunity, similar results hold for other fairness metrics; demographic parity and equalized odds.
See Appendix~\ref{sec:C} for details.
\end{remark}

\begin{remark}
In~\cite{Hardt_2016}, Hardt et al. proposed a post-processing method that can find a perfectly fair model on any data distribution.
We note that their method outputs a randomized model, hence it is not applicable to our setting where the hypothesis class consists of deterministic models.
\end{remark}

\begin{figure}[t]
    \centering
    \begin{subfigure}[b]{0.25\columnwidth}
        \includegraphics[width=\columnwidth]{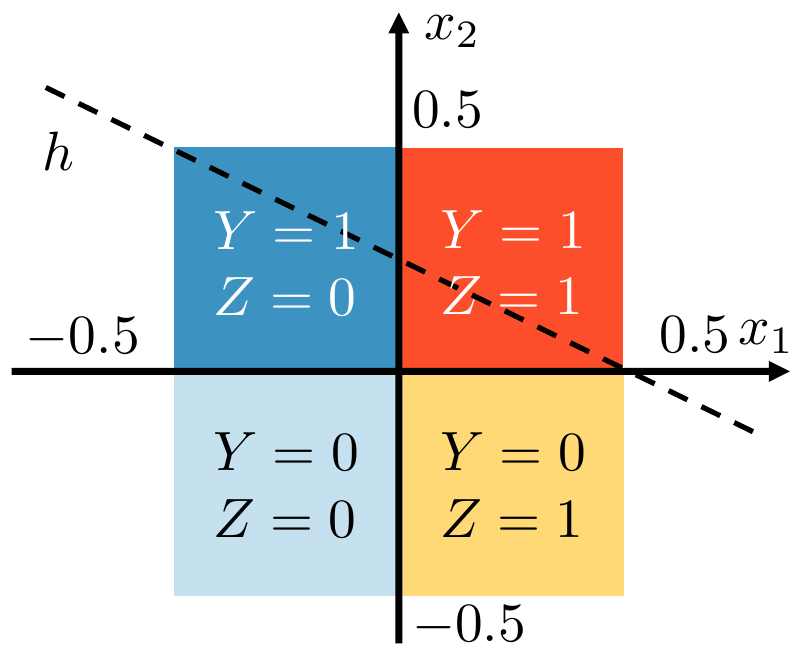}
        \caption{Uncorrupted data $\mathcal{D}$}
        \label{fig:4a}
    \end{subfigure}
    \begin{subfigure}[b]{0.24\columnwidth}
        \includegraphics[width=\columnwidth]{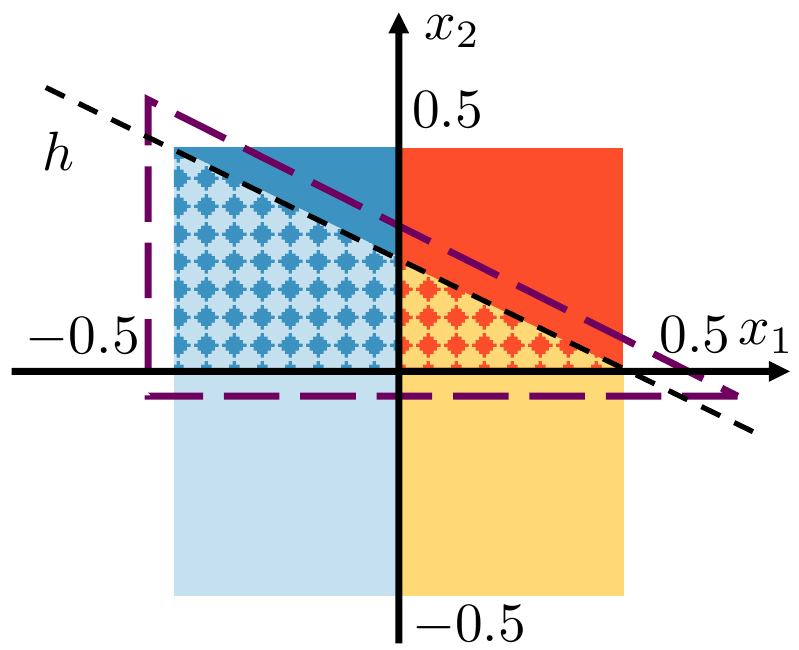}
        \caption{$\widetilde{\mathcal{D}}$}
        \label{fig:4b}
    \end{subfigure}
    \begin{subfigure}[b]{0.24\columnwidth}
        \includegraphics[width=\columnwidth]{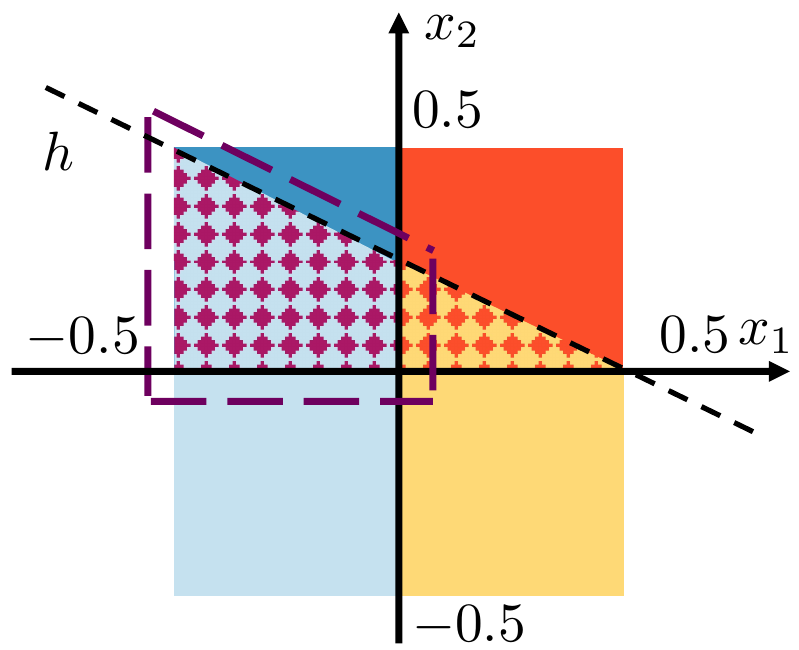}
        \caption{$\text{Fair}_h(\widetilde{\mathcal{D}})$}
        \label{fig:4c}
    \end{subfigure}
    \caption{A visualization of our two-stage attack algorithm with $2$-dimensional feature space $\mathcal{X}$, where $x_1$ and $x_2$ denote the first and second coordinates, respectively.
    (a) The distribution $\mathcal{D}$ follows the description in Example~\ref{ex:1}.
    The target model $h$ predicts samples above its decision boundary (the black dotted line) as positive ($Y=1$).
    (b) In the first stage, the attacker constructs $\widetilde{\mathcal{D}}$ from $\mathcal{D}$ by flipping the $Y$ value with probability $0.6$ (this can be any number in $(0.5,1)$) when $h(X)=0$ and $Y=1$.
    As a result, the triangular region with the dashed boundary is perturbed in the first stage.
    Let $\tilde{f}(x,y,z)$ be the density function of $\widetilde{\mathcal{D}}$.
    For $x=(x_1,x_2)$ in the blue dotted trapezoidal region, $\tilde{f}(x,y,z)$ is $0.4$ if $y=1,z=0$, $0.6$ if $y=0,z=0$, and $0$ otherwise.
    For $x=(x_1,x_2)$ in the red dotted triangular region, $\tilde{f}(x,y,z)$ is $0.4$ if $y=1,z=1$, $0.6$ if $y=0,z=1$, and $0$ otherwise.    
    Then $h$ is the risk minimizer on $\widetilde{\mathcal{D}}$, and $\tilde{p_h}, \tilde{q_h}, \tilde{r_h},\tilde{s_h}$ are $0.075, 0.0625, 0.025, 0.1875$, respectively.
    (c) In the second stage, the attacker constructs $\text{Fair}_h(\widetilde{\mathcal{D}})$ from $\widetilde{\mathcal{D}}$ by flipping the $Z$ value with probability $\sfrac{2}{3}$, computed as per the formula in the second stage, when $h(X)=0, Y=1$, and $Z=0$.
    As a result, the trapezoidal region with the dashed boundary is perturbed in the second stage.
    Let $\tilde{f}_h(x,y,z)$ be the density function of $\text{Fair}_h(\widetilde{\mathcal{D}})$.
    For $x=(x_1,x_2)$ in the purple dotted trapezoidal region, $\tilde{f}_h(x,y,z)$ is $0.133$ if $y=1,z=0$, $0.267$ if $y=1,z=1$, $0.6$ if $y=0,z=0$, and $0$ if $y=0,z=1$.
    }
    \label{fig:4}
\end{figure}

Since $\text{Fair}_h(\mathcal{D})\in \mathcal{S}$ by Lem.~\ref{lem:2}-(i), Lem.~\ref{lem:3} can be applied to $\text{Fair}_h(\mathcal{D})$, and there exist infinitely many linear classifiers that are perfectly fair on $\text{Fair}_h(\mathcal{D})$ (if $n\ge 2$).
If $\mathcal{H}$ contains all linear classifiers (which is usually true), then it includes infinitely many models that are perfectly fair on $\text{Fair}_h(\mathcal{D})$.
Therefore, in general cases, we cannot guarantee that $\mathcal{A}_{0}(\text{Fair}_h(\mathcal{D}))= \{h\}$.

This shows the need of sophisticated attack strategies that guarantee both the minimum risk and the perfect fairness of $h$ on the resulting poisoned distribution.
We now illustrate our two-stage attack strategy that satisfies the desired properties.

\textbf{First stage} The attacker picks any distribution $\widetilde{\mathcal{D}}$ in $\Lambda_{1}(h)$, where $\tilde{f}(x,y,z)$ is the density function of $\widetilde{\mathcal{D}}$.
Recall that $\Lambda_{1}(h)=\{\mathcal{D}' \colon\ \mathcal{D}'\in \mathcal{S}, \mathcal{A}_{1}(\mathcal{D}')= \{h\}\}$, and $\mathcal{A}_{1}(\mathcal{D}')$ is the set of unconstrained risk minimizers on $\mathcal{D}'$.
We note that it is easy to find distributions in $\Lambda_{1}(h)$.
For example, when $\mathcal{H}$ is the set of all measurable functions from $\mathcal{X}$ to $\mathcal{Y}$, $h$ achieves the minimum risk if and only if $h$ is the Bayes classifier on $\widetilde{\mathcal{D}}$.

\textbf{Second stage} The attacker constructs the distribution $\text{Fair}_h(\widetilde{\mathcal{D}})$ with the density function $\tilde{f_h}(x,y,z)$ in a similar way that we get $\text{Fair}_h(\mathcal{D})$ in Sec.~\ref{sec:4:a}.
Specifically, calculating probabilities over $\widetilde{\mathcal{D}}$ in the definition of $p_h,q_h,r_h,s_h$ given by Thm.~\ref{thm:1}, we get $\tilde{p_h}, \tilde{q_h}, \tilde{r_h},\tilde{s_h}$, \emph{e.g.}, $\tilde{p_h} = \Pr_{\widetilde{\mathcal{D}}}(h(X)=0, Y=1, Z=0)$.
Then $\tilde{f_h}(x,y,z)$ is obtained by replacing $p_h,q_h,r_h,s_h,f$ with $\tilde{p_h},\tilde{q_h},\tilde{r_h},\tilde{s_h},\tilde{f}$, respectively, in the construction of $f_h$ in Sec.~\ref{sec:4:a}.
For example, when $\tilde{p_h}+\tilde{r_h}\ge \tilde{q_h}+\tilde{s_h}$ and $\frac{\tilde{q_h}}{\tilde{p_h}}\ge \frac{\tilde{s_h}}{\tilde{r_h}}$, $\tilde{f_h}(x,y,z)$ is equal to $\tilde{f}(x,y,z) + \mathbb{1}(h(x)=1, y=1)\cdot(2z-1)\cdot{\textstyle\frac{\tilde{q_h}\tilde{r_h}-\tilde{p_h}\tilde{s_h}}{(\tilde{p_h}+\tilde{r_h})\tilde{q_h}}} \tilde{f}(x,1,0)$.

Fig.~\ref{fig:4} shows how our two-stage attack algorithm works on a toy example.
The following proposition provides key properties to derive the upper bound on $d^\star_{\text{TV}}(h)$.

\begin{proposition}\label{prop:1}
Let $\widetilde{\mathcal{D}}\in\Lambda_{1}(h)$. Then, (i) $\text{Fair}_h(\widetilde{\mathcal{D}})\in \Lambda_{0}(h)$, and (ii) $d_{\text{TV}}(\widetilde{\mathcal{D}}, \text{Fair}_h(\widetilde{\mathcal{D}}))=C(h, \widetilde{\mathcal{D}})$.
\end{proposition}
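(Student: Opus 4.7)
The plan is to reduce both parts to the previously established Lemma 2, noting that the only non-routine ingredient is showing that $\text{Fair}_h$ preserves unconstrained risk minimizers, which follows from the observation that $\text{Fair}_h$ is a pure $Z$-flipping attack.

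Part (ii) is immediate: Lemma 2(iii) was stated for a generic input distribution $\mathcal{D}$, so applying it with $\mathcal{D}$ replaced by $\widetilde{\mathcal{D}}$ yields $d_{\text{TV}}(\widetilde{\mathcal{D}}, \text{Fair}_h(\widetilde{\mathcal{D}})) = C(h, \widetilde{\mathcal{D}})$. I would just remark this in one line.

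For part (i), I need to verify the two conditions in the definition of $\Lambda_0(h)$, namely $\text{Fair}_h(\widetilde{\mathcal{D}}) \in \mathcal{S}$ and $\mathcal{A}_0(\text{Fair}_h(\widetilde{\mathcal{D}})) = \{h\}$. The first is Lemma 2(i) applied to $\widetilde{\mathcal{D}}$. The second is the main step. My plan is as follows. First, inspect the four cases defining $\tilde f_h$ and observe that in every case the perturbation only moves mass between $(x,1,0)$ and $(x,1,1)$, i.e., only the $Z$-coordinate is flipped (conditionally on certain values of $h(x), Y$). Consequently, the $(X,Y)$-marginal of $\text{Fair}_h(\widetilde{\mathcal{D}})$ coincides with that of $\widetilde{\mathcal{D}}$; concretely, $\tilde f_h(x,y,0)+\tilde f_h(x,y,1) = \tilde f(x,y,0)+\tilde f(x,y,1)$ almost everywhere. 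Because the $0/1$ risk of any $g\in\mathcal{H}$ depends only on the $(X,Y)$-marginal, this yields
\begin{equation*}
R_\ell(g;\text{Fair}_h(\widetilde{\mathcal{D}})) = R_\ell(g;\widetilde{\mathcal{D}}) \quad \text{for every } g\in\mathcal{H}.
\end{equation*}

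With this risk-invariance in hand, I conclude as follows. Since $\widetilde{\mathcal{D}}\in \Lambda_1(h)$, we have $\mathcal{A}_1(\widetilde{\mathcal{D}}) = \{h\}$, meaning $h$ is the unique unconstrained true risk minimizer on $\widetilde{\mathcal{D}}$. By the risk-invariance above, $h$ is then also the unique unconstrained true risk minimizer on $\text{Fair}_h(\widetilde{\mathcal{D}})$; in particular, $h$ achieves the minimum risk over the (possibly smaller) class of perfectly fair models on $\text{Fair}_h(\widetilde{\mathcal{D}})$, and is the only such minimizer. Combining this with Lemma 2(ii), which states that $h$ is perfectly fair on $\text{Fair}_h(\widetilde{\mathcal{D}})$, gives $\mathcal{A}_0(\text{Fair}_h(\widetilde{\mathcal{D}})) = \{h\}$, completing the proof.

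The only mildly delicate step is the case-by-case verification that the construction of $\tilde f_h$ is a pure $Z$-flipping attack preserving the $(X,Y)$-marginal; this is routine bookkeeping across the four defining cases, but it is the logical crux that lets us transfer the uniqueness hypothesis $\widetilde{\mathcal{D}}\in\Lambda_1(h)$ to $\text{Fair}_h(\widetilde{\mathcal{D}})$ and thereby bypass the obstruction (from Lemma 3) that infinitely many perfectly fair classifiers may exist.
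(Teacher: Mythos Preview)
Your proposal is correct and follows essentially the same route as the paper's proof: invoke Lemma~2 for (ii) and for the ``$\text{Fair}_h(\widetilde{\mathcal{D}})\in\mathcal{S}$'' and ``$h$ perfectly fair'' parts of (i), then use risk-invariance to transfer the uniqueness of $h$ as unconstrained minimizer from $\widetilde{\mathcal{D}}$ to $\text{Fair}_h(\widetilde{\mathcal{D}})$. Your explicit observation that $\text{Fair}_h$ is a pure $Z$-flip (hence preserves the $(X,Y)$-marginal) is precisely the content the paper leaves as ``one can check,'' so your argument is in fact slightly more detailed but not materially different.
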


\begin{proof}
Using the arguments used in Lem.~\ref{lem:2}, one can similarly show that (a) $\text{Fair}_h(\widetilde{\mathcal{D}}) \in \mathcal{S}$, (b) $h$ is perfectly fair on $\text{Fair}_h(\widetilde{\mathcal{D}})$, and (c) $d_{\text{TV}}(\mathcal{D}, \text{Fair}_h(\widetilde{\mathcal{D}})) = C(h, \widetilde{\mathcal{D}})$.
Then, it suffice to show  $\text{Fair}_h(\widetilde{\mathcal{D}})\in \Lambda_{0}(h)$.
Recall that $\Lambda_{1}(h)=\{\mathcal{D}' \colon\ \mathcal{D}'\in \mathcal{S}, \mathcal{A}_{1}(\mathcal{D}')= \{h\}\}$.
Since $\widetilde{\mathcal{D}}\in\Lambda_{1}(h)$, $h$ is the unique risk minimizer $\arg\min_{g\in \mathcal{H}}  R_{\ell}(g; \widetilde{\mathcal{D}})$.
One can check that any model $g\in \mathcal{H}$ has the same risk on both $\widetilde{\mathcal{D}}$ and $\text{Fair}_h(\widetilde{\mathcal{D}})$, so $h$ is the unique risk minimizer on $\text{Fair}_h(\widetilde{\mathcal{D}})$.
Combining this with the facts that (a) $\text{Fair}_h(\widetilde{\mathcal{D}}) \in \mathcal{S}$ and (b) $h$ is perfectly fair on $\text{Fair}_h(\widetilde{\mathcal{D}})$, one can conclude that $\text{Fair}_h(\widetilde{\mathcal{D}})\in \Lambda_{0}(h)$.
\end{proof}

We are now ready to derive the upper bound on $d^\star_{\text{TV}}(h)$.
The following holds for any $\widetilde{\mathcal{D}}\in \Lambda_{1}(h)$:
\begin{align*}
    d^\star_{\text{TV}}(h) = \inf_{\mathcal{D}'\in \Lambda_{0}(h)} d_{\text{TV}}(\mathcal{D}, \mathcal{D}') \overset{(a)}{\le} d_{\text{TV}}(\mathcal{D}, \text{Fair}_h(\widetilde{\mathcal{D}}))\\ \overset{(b)}{\le} d_{\text{TV}}(\mathcal{D}, \widetilde{\mathcal{D}}) + d_{\text{TV}}(\widetilde{\mathcal{D}}, \text{Fair}_h(\widetilde{\mathcal{D}})) \overset{(c)}{=} d_{\text{TV}}(\mathcal{D}, \widetilde{\mathcal{D}}) + C(h, \widetilde{\mathcal{D}})
\end{align*}
where (a) follows from Prop.~\ref{prop:1}-(i), (b) follows from the triangle inequality, and (c) follows from Prop.~\ref{prop:1}-(ii).
As $d^\star_{\text{TV}}(h) \le d_{\text{TV}}(\mathcal{D}, \widetilde{\mathcal{D}}) + C(h, \widetilde{\mathcal{D}})$ for any $\widetilde{\mathcal{D}}\in \Lambda_{1}(h)$, the upper bound on $d^\star_{\text{TV}}(h)$ in Thm.~\ref{thm:1} can be obtained as follows:
\begin{align}
d^\star_{\text{TV}}(h) \le \inf_{\widetilde{\mathcal{D}}\in \Lambda_{1}(h)} \left(d_{\text{TV}}(\mathcal{D}, \widetilde{\mathcal{D}}) + C(h,
\widetilde{\mathcal{D}})\right).
\end{align}

\section{Sensitive Attribute Flipping Algorithm}\label{sec:5}
We show how the results made in Sec.~\ref{sec:4} can be applied to the design of a computationally efficient flipping attack algorithm against FERM.
When the target model is the unconstrained risk minimizer, as shown in Cor.~\ref{cor:1}, the attack algorithm proposed in Sec.~\ref{sec:4} is optimal.
Indeed, the first stage of the algorithm is not needed at all in this case, and $Z$-flipping in the second stage is sufficient for successful attacks.

\begin{algorithm}[t]
   \caption{$Z$-flipping algorithm}
   \label{alg:1}
\begin{algorithmic}
   \State {\bfseries Input:} The training set $D$, the target model $h_{\text{target}}$.
   \State {\bfseries Output:} The poisoned training set $D'$.
   \For{$(a,b,c)\in \{0,1\}\times \{0,1\} \times \{0, 1\}$}
   \State $D_{a,b,c}\leftarrow \{(x,y,z)\in D\colon\ h_{\text{target}}(x)=a, y=b, z=c \}$
   \EndFor
   \State $P\leftarrow |D_{0,1,0}|$, $Q\leftarrow |D_{1,1,0}|$,
   $R\leftarrow |D_{0,1,1}|$, $S\leftarrow |D_{1,1,1}|$\\
   $\alpha \leftarrow \left\lfloor\sfrac{|PS-QR|}{\max\{P+R, Q+S\}}\right\rfloor$
   \If{$P+R\ge Q+S$ and $\frac{Q}{P}\ge \frac{S}{R}$}
    \State Randomly choose a subset $\mathcal{T}$ of $D_{1,1,0}$ s.t. $|\mathcal{T}|=\alpha$.
    \ElsIf{$P+R\ge Q+S$ and $\frac{Q}{P}< \frac{S}{R}$}
    \State Randomly choose a subset $\mathcal{T}$ of $D_{1,1,1}$ s.t. $|\mathcal{T}|=\alpha$.
    \ElsIf{$P+R< Q+S$ and $\frac{Q}{P}\ge \frac{S}{R}$}
    \State Randomly choose a subset $\mathcal{T}$ of $D_{0,1,1}$ s.t. $|\mathcal{T}|=\alpha$.
    \Else
    \State Randomly choose a subset $\mathcal{T}$ of $D_{0,1,0}$ s.t. $|\mathcal{T}|=\alpha$.
  \EndIf
    \State $\mathcal{T}_{\text{p}} \leftarrow \{(x,y,1-z)\colon\ (x,y,z)\in \mathcal{T}\}$
    \State $D' \leftarrow (D\setminus \mathcal{T})\cup \mathcal{T}_{\text{p}}$
\end{algorithmic}
\end{algorithm}

Inspired by this, we consider the empirical counterpart of the second stage of the attack proposed in Sec.~\ref{sec:4}.
Shown in Alg.~\ref{alg:1} is the pseudocode of our attack algorithm.
In specific, it computes the number of $Z$-flipping, denoted $\alpha$ in Alg.~\ref{alg:1}, using the formula for $C(h, \mathcal{D})$ given in Thm.~\ref{thm:1} where $(p,q,r,s)$ are replaced with the empirical counterparts of them.
Depending on which of the four conditions hold, it chooses a random subset of size $\alpha$ from the corresponding subset of the training set $D$.
It then simply flips the $Z$ values of them to output the poisoned training set $D'$.
The following proposition ensures that Alg.~\ref{alg:1} makes the target model look almost fair on the poisoned training set $D'$ under mild conditions.

\begin{proposition}\label{prop:2}
Let $D=\{(x_i, y_i, z_i)\}_{i=1}^{m}$ be the training set, and $h$ be the target model.
Let $D_{a,b,c}= \{(x,y,z)\in D\colon\ h(x)=a, y=b, z=c \}$, $P=|D_{0,1,0}|$, $Q= |D_{1,1,0}|$, $R=|D_{0,1,1}|$, $S=|D_{1,1,1}|$. If $\frac{P}{m}, \frac{Q}{m}, \frac{R}{m}, \frac{S}{m}$ are $\Omega(1)$, then Alg.~\ref{alg:1} makes $h$ be $O(\frac{1}{m})$-fair on the poisoned training set $D'$.
\end{proposition}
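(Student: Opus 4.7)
The plan is to show that Alg.~\ref{alg:1} is the empirical analog of the $Z$-flipping construction used in Lem.~\ref{lem:2}, so that the only source of suboptimality is the flooring of a non-integer number of flips to an integer. I would quantify this one-unit discretization error as an $O(1/m)$ perturbation of the empirical true positive rates, and then convert that into an $O(1/m)$ bound on the empirical fairness gap.

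First, I would isolate one of the four branches (the others are structurally symmetric) and compute the exact number of flips $\alpha^\star$ that would equalize the two group-conditional true positive rates on the post-flip dataset. For example, in the branch $P+R\ge Q+S$ and $Q/P\ge S/R$, the algorithm flips $\alpha$ samples from $D_{1,1,0}$ to have $Z=1$, producing new counts $(P,Q-\alpha,R,S+\alpha)$. Setting $(Q-\alpha)/(P+Q-\alpha)=(S+\alpha)/(R+S+\alpha)$ yields $\alpha^\star=(QR-PS)/(P+R)$, which matches the formula implemented in Alg.~\ref{alg:1} prior to the floor. A parallel calculation in the other three cases gives $\alpha^\star=|QR-PS|/\max\{P+R,Q+S\}$ in each, so the algorithm uses $\alpha=\lfloor\alpha^\star\rfloor$ and hence $|\alpha-\alpha^\star|<1$. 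Along the way I must check that $\alpha^\star$ does not exceed the size of the pool being sampled from (e.g.\ $\alpha^\star\le Q$ in the above case), which reduces to an elementary inequality in the nonnegative counts $P,Q,R,S$.

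Next, I would define $g(\alpha)$ as the signed gap $\Pr_{D'}(h(X)=1\mid Y=1,Z=0)-\Pr_{D'}(h(X)=1\mid Y=1,Z=1)$ viewed as a smooth function of a real-valued flip count $\alpha$. By construction $g(\alpha^\star)=0$. A direct differentiation in the representative branch gives $g'(\alpha)=-P/(P+Q-\alpha)^2-R/(R+S+\alpha)^2$. Because $\alpha^\star\le Q$ and the interval $[\alpha,\alpha^\star]$ has width less than one, both denominators $P+Q-\alpha$ and $R+S+\alpha$ are at least $P$ and $R$ respectively (up to a trivial $-1$ correction absorbed in the asymptotics). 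The assumption $P,Q,R,S=\Omega(m)$ then forces each denominator to be $\Omega(m)$, so $|g'|=O(1/m)$ uniformly on $[\alpha,\alpha^\star]$. The mean value theorem yields $|g(\alpha)|=|g(\alpha)-g(\alpha^\star)|\le|\alpha-\alpha^\star|\cdot\sup|g'|=O(1/m)$.

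Finally, I would convert the group-wise TPR gap into the fairness gap of Def.~\ref{def:1}. Writing the pooled TPR on $D'$ as $w_0\,\text{TPR}_0+w_1\,\text{TPR}_1$ with $w_z=\Pr_{D'}(Z=z\mid Y=1)$, each difference $|\text{TPR}_z-\text{TPR}_{\text{pool}}|$ equals $w_{1-z}|\text{TPR}_0-\text{TPR}_1|\le|g(\alpha)|=O(1/m)$, so $\Delta(h,D')=O(1/m)$. The only real obstacle is the bookkeeping across the four symmetric branches: in each, I need to identify the correct $\alpha^\star$, check the pool-size inequality, and verify that the denominators stay $\Omega(m)$. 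None of these steps is subtle individually; the work is to confirm them uniformly.
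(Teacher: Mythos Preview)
Your proposal is correct and shares the same conceptual core as the paper's proof: both arguments exploit that $\alpha=\lfloor\alpha^\star\rfloor$ with $\alpha^\star=(QR-PS)/(P+R)$ the exact equalizer, and both reduce the fairness gap to a one-unit discretization error that is $O(1/m)$ under the $\Omega(m)$ assumption on $P,Q,R,S$. The pool-size check $\alpha^\star\le Q$ also appears in the paper.

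The technical route differs. The paper works directly with the quantity in Def.~\ref{def:1}: it computes
\[
\left|\Pr_{D'}(h(X)=1\mid Y=1,Z=0)-\Pr_{D'}(h(X)=1\mid Y=1)\right|
=\left|\frac{QR-PS-\alpha(P+R)}{(P+Q-\alpha)(P+Q+R+S)}\right|,
\]
bounds the numerator by $P+R$ (from $\alpha^\star-\alpha<1$) and the denominator below by $P(P+Q+R+S)$ (from $\alpha\le Q$), yielding the $1/P=O(1/m)$ bound in one stroke. You instead bound the group-wise gap $g(\alpha)=\text{TPR}_0-\text{TPR}_1$ via the mean value theorem and then pass to the pooled gap through the convex-combination identity. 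Your approach is slightly more modular (the MVT step would transfer verbatim to other perturbation analyses), while the paper's closed-form algebra avoids the detour through $g$ and the final conversion step. Both are equally elementary and neither has a real advantage in this setting.
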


\begin{proof}
We provide the proof for the case where $P+R\ge Q+S$ and $\frac{Q}{P}\ge \frac{S}{R}$, and other cases can be handled in a similar way.
Recall that $\Delta(h, D')$ is equal to
\begin{align*}
     \max_{z\in \{0,1\}} \left| \Pr_{D'}(h(X)=1|Y=1, Z=z) - \Pr_{D'}(h(X)=1|Y=1) \right|,
\end{align*}
and it suffices to show $\Delta(h, D')=O(\frac{1}{m})$.
Let $D'_{a,b,c}= \{(x,y,z)\in D'\colon\ h(x)=a, y=b, z=c \}$.
Since $D'$ is obtained by flipping $Z$ values of samples in $D_{1,1,0}$, we get $|D'_{0,1,0}|=P, |D'_{1,1,0}| = Q-\alpha, |D'_{0,1,1}| = R, |D'_{1,1,1}| = S+\alpha$.
Then, a direct calculation yields
\begin{align}
    \left| \Pr_{D'}(h(X)=1|Y=1, Z=0) - \Pr_{D'}(h(X)=1|Y=1) \right| &= \left| \frac{Q-\alpha}{P+(Q-\alpha)} - \frac{Q+S}{P+Q+R+S} \right|\nonumber \\
    &= \left|\frac{QR-PS-\alpha(P+R)}{(P+Q-\alpha)(P+Q+R+S)} \right|. \label{eq:10}
\end{align}
Since $\alpha = \Big\lfloor\frac{QR-PS}{P+R}\Big\rfloor$, $\frac{QR-PS}{P+R} - \alpha < 1$.
This implies $QR-PS-\alpha(P+R) < P+R$, and \eqref{eq:10} is upper bounded by
\begin{align*}
    & \left|\frac{P+R}{(P+Q-\alpha)(P+Q+R+S)} \right| \le \left|\frac{1}{(P+Q-\alpha)} \right| \le \frac{1}{P},
\end{align*}
where the last inequality comes from $\alpha = \Big\lfloor\frac{QR-PS}{P+R}\Big\rfloor \le \frac{QR-PS}{P+R} \le \frac{QR+PQ}{P+R} = Q$.
Moreover, $\frac{1}{P}=O(\frac{1}{m})$ because $\frac{P}{m}=\Omega(1)$ by the assumption.
Thus, 
$$|\Pr_{D'}(h(X)=1|Y=1, Z=0) - \Pr_{D'}(h(X)=1|Y=1)| = O(\frac{1}{m}).$$
One can similarly show that $$|\Pr_{D'}(h(X)=1|Y=1, Z=1) - \Pr_{D'}(h(X)=1|Y=1)| = O(\frac{1}{m}).$$
Therefore, 
$$\Delta(h, D')=O(\frac{1}{m}).$$
\end{proof}

\begin{figure}[t]
\centering
\includegraphics[width=0.5\columnwidth]{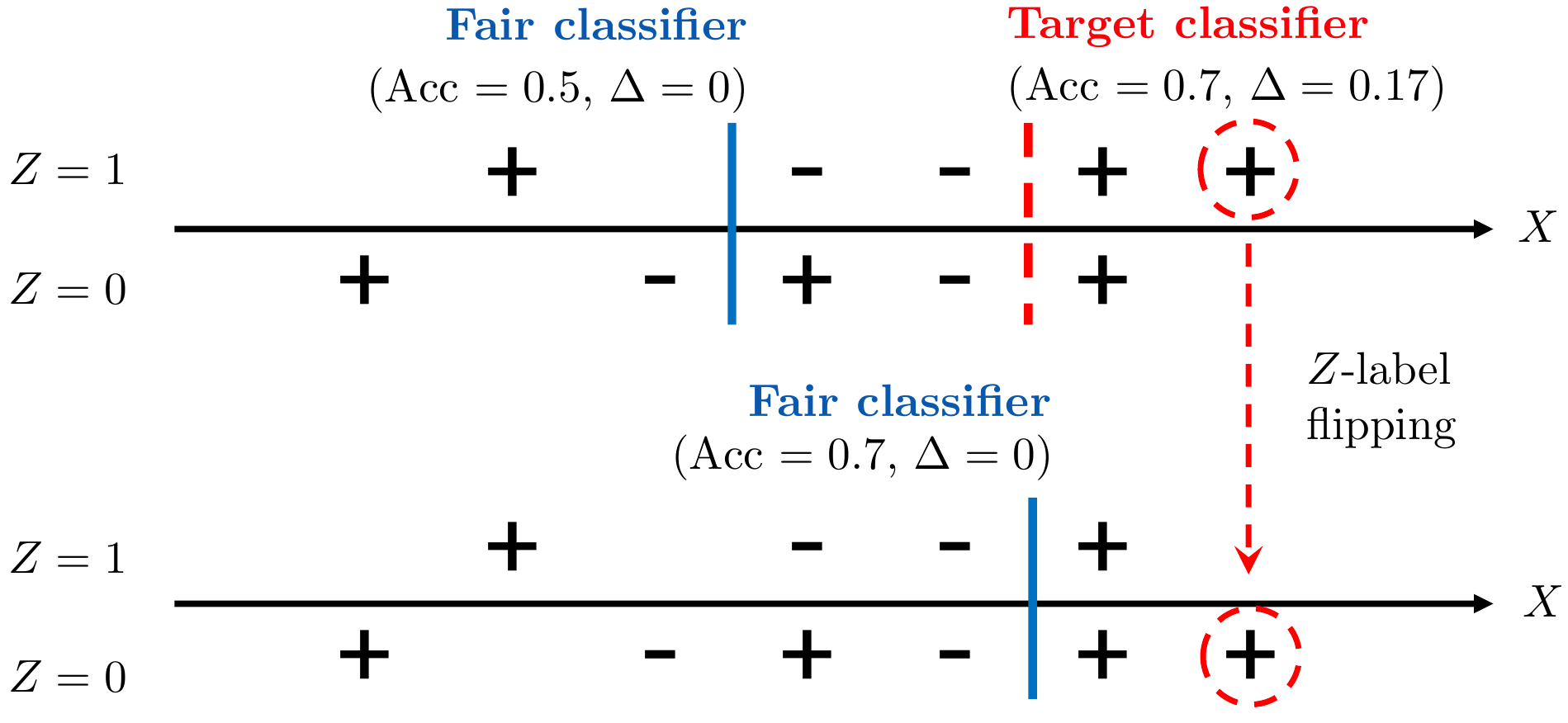}
\caption{A toy example that shows how Alg.~\ref{alg:1} makes FERM output the unconstrained risk minimizer.
We consider a dataset with 10 samples, where $X$ denotes the feature that takes its value in $\mathbb{R}$, $+$ and $-$ denote $Y$ values to be predicted by the learning algorithm, and $Z$ denotes a sensitive attribute (\emph{e.g.,} gender).
We consider linear classifiers that predict samples greater than their thresholds as positive, where the thresholds are shown as vertical lines in the figure.
Acc denotes the accuracy, and $\Delta$ denotes the fairness gap that measures the unfairness of the classifier (see Def.~\ref{def:1} for details).
When $\Delta=0$, the classifier is perfectly fair and satisfies \emph{equal opportunity}~\cite{Hardt_2016}, one of the popular fairness metrics.
On the clean dataset, the fair learning algorithm outputs the fair classifier, the blue solid line, with Acc $=0.5$ and $\Delta=0$.
In this example, the attacker's goal is to make the fair learning algorithm to output the empirical risk minimizer, the red dashed line, which is unfair because $\Delta=0.17$.
By flipping the $Z$ value of the rightmost sample, the attacker can achieve the goal with the minimum number of flipping, thereby degrading the fairness of the fair learning algorithm. }\label{fig:5}
\end{figure}

We focus on the case where the target model $h_\text{target}$ is the empirical risk minimizer on $D$.
Then Alg.~\ref{alg:1} outputs $D'$ on which $h_\text{target}$ looks almost fair by Prop.~\ref{prop:2}.
Moreover, $h_\text{target}$ still achieves the minimum empirical risk on $D'$ because $Z$-flipping does not affect the risk.
Therefore, our attack algorithm increases the chance of $h_\text{target}$ being found by the learner's FERM algorithm, thereby degrading the fairness of FERM.
Fig.~\ref{fig:5} shows how Alg.~\ref{alg:1} makes FERM output the empirical risk minimizer with a toy example.

\section{Experimental Results}\label{sec:6}
We generate a synthetic dataset $D=\{(x_i, y_i, z_i)\}_{i=1}^{6000}$ where $x_i\in \mathbb{R}^2, y_i\in \{0, 1\}, z_i\in \{0,1\}$ for $1\le i \le 6000$, following the method used in~\cite{Zafar2017FC}.
Specifically, we set $y_i=1$ for $1 \le i \le 3000$ and $y_i=0$ for $3001 \le i \le 6000$.
Then we randomly draw $x_1,\dots, x_{3000}$ from $\mathcal{N}_1$ and $x_{3001},\dots, x_{6000}$ from $\mathcal{N}_2$, where $\mathcal{N}_1$ and $\mathcal{N}_2$ are Gaussian distributions $N([2;2], [5,1; 1,5])$ and $N([-2;-2], [10,1; 1,3])$, respectively.
Let $f_1$ and $f_2$ be the density functions of $\mathcal{N}_1$ and $\mathcal{N}_2$, respectively.
For $1\le i \le 6000$, we draw $z_i$ from the Bernoulli distribution $\text{Bern}(\frac{f_1(x_i')}{f_1(x_i')+f_2(x_i')})$ where $x_i'=[\cos\frac{\pi}{6}, -\sin\frac{\pi}{6}; \sin\frac{\pi}{6}, \cos\frac{\pi}{6}]x_i$.
Note that $y_i$ and $z_i$ have a correlation by construction, so the solution of vanilla ERM will be unfair.
Let $D_{\text{train}}$ and $D_{\text{test}}$ denote the training set and the test set, respectively.
All experiments are repeated 5 times, and the accuracy and unfairness are measured on $D_{\text{test}}$; we use $\Delta(h_{\text{target}}, D_{\text{test}})$ to quantify the unfairness of $h_{\text{target}}$.

We compare our attack algorithm with data poisoning attack algorithms: (1) random $Y$-flip chooses random samples from $D_{\text{train}}$ and flips $Y$ values; (2) random $Z$-flip chooses random samples and flips $Z$ values; (3) random $Y\&Z$-flip chooses random samples and flips both $Y$ and $Z$ values; (4) adversarial sampling (AS) chooses adversarial samples from the feasible attack set using the online gradient descent algorithm proposed in~\cite{Chang2020Adversarial} and adds them to $D_{\text{train}}$.
We evaluate these attacks against fair learning algorithms: (1) in-processing method using fairness constrains (FC)~\cite{Zafar2017FC}; (2) fair training against adversarial perturbations (Err-Tol)~\cite{celis2021fair}; (3) fair and robust training (FR-Train)~\cite{Roh2020FR-Train} given the clean validation set.

\begin{table}[t]
\caption{Comparison with other baseline attack algorithms.
The fairness gap $\Delta$ measures the unfairness of the model.
The target model $h_{\text{target}}$ is the output of logistic regression; the accuracy and fairness gap are 0.88 and 0.19, respectively.
Our $Z$-flip attack makes the output be significantly unfair, with only 3.2\% of poisoning rate.
}\label{table:1}
\begin{center}
\begin{small}
\begin{tabular}{lcccccccc}
\toprule
 & \multicolumn{2}{c}{FC~\cite{Zafar2017FC}}   & \multicolumn{2}{c}{Err-Tol~\cite{celis2021fair}} & \multicolumn{2}{c}{FR-Train~\cite{Roh2020FR-Train}} \\
\cmidrule{2-7}
Attack method & Acc. & $\Delta$ & Acc. & $\Delta$ & Acc. & $\Delta$ \\
\midrule
Uncorrupted  & 0.79 & 0.05 & 0.81  & 0.06 & 0.79 & 0.03 \\
Random Y-flip  & 0.77 & 0.01 & 0.75 & 0.03 & 0.76 & 0.02 \\
Random Z-flip  & 0.79 & 0.06 & 0.87 & 0.18 & 0.81 & 0.04 \\
Random Y\&Z-flip  & 0.80 & 0.07 & 0.88 & 0.19 & 0.78  & 0.03 \\
AS~\cite{Chang2020Adversarial}  & 0.78 & 0.02 & 0.78 & 0.03 & 0.77  & 0.02 \\
Our Z-flip  & 0.85 & \textbf{0.14}  & 0.88 & \textbf{0.19} & 0.82 & \textbf{0.08} \\
\bottomrule
\end{tabular}
\end{small}
\end{center}
\end{table}

We find $h_{\text{target}}$ via empirical risk minimization with logistic loss and get the poisoned training set $D'$ using Alg.~\ref{alg:1}.
Shown in Table~\ref{table:1} is the performance of attack algorithms against fair learning algorithms.
When the learner runs fair learning algorithms on the uncorrupted dataset, the fairness gap significantly decreases at the cost of degraded accuracy, exhibiting a well-known tradeoff between accuracy and fairness.
However, with only $3.2\%$ of poisoning rate, our $Z$-flip attack makes the output be significantly unfair, outperforming (or achieving comparable attack performances to) other attack baselines.
Interestingly, our attack successfully degrades the fairness of robust fair training algorithms; Err-Tol and FR-Train.
Err-Tol essentially achieves its robustness by relaxing the fairness threshold of its constraints, where the relaxed threshold is carefully calculated using the known poisoning rate.
By Prop.~\ref{prop:2}, our attack makes $h_{\text{target}}$ look almost fair on $D'$, so $h_{\text{target}}$ satisfies the fairness constraint of Err-Tol.
As $h_{\text{target}}$ still minimizes the empirical risk on $D'$, Err-Tol will output the model close to $h_{\text{target}}$.
FR-Train makes use of the clean validation set to achieve the robustness, but its performance on adversarial $Z$-flip attacks is not studied in the previous work.
We empirically show that our $Z$-flip attack makes FR-Train output an unfair model with the fairness gap of $0.08$.

\section{Conclusion}\label{sec:7}
We studied poisoning attacks against risk minimization with fairness constraints.
We found the lower and upper bounds on the minimum amount of data perturbation required for successful flipping attack for the case of true risk minimization with fairness constraints. 
Inspired by the fact that sensitive attribute flipping attack is optimal for certain cases, we designed an efficient $Z$-flipping attack algorithm that can compromise the performance of Fair Empirical Risk Minimization (FERM).
We empirically showed that our attack algorithm can degrade the fairness of FERM on synthetic data against existing fair learning algorithms.

We conclude our paper by enumerating important open problems.
Our attack algorithm is optimal and our bounds are tight when the target model is the unique unconstrained risk minimizer.
Tightening the lower and upper bounds in Thm.~\ref{thm:1} for a general target model is an important future work.
Our theoretical analysis is limited to the case where both $\mathcal{Y}$ and $\mathcal{Z}$ are binary.
We conjecture the theoretical analysis can be extended to the case where $\mathcal{Y}$ and $\mathcal{Z}$ are non-binary.
Moreover, it would be interesting to extend our attack algorithm into the federated learning setting.

\section{Acknowledgements}
This work was supported in part by NSF Award DMS-2023239, NSF/Intel Partnership on Machine Learning for Wireless Networking Program under Grant No. CNS-2003129, and the Understanding and Reducing Inequalities Initiative of the University of Wisconsin-Madison, Office of the Vice Chancellor for Research and Graduate Education with funding from the Wisconsin Alumni Research Foundation.

\bibliographystyle{plainnat}
\bibliography{poisoningattack_bib}

\newpage
\appendix

\section{Extension to Another Fairness Metric: Demographic Parity~\cite{Feldman_2015}}\label{sec:A}
We can measure the fairness gap with respect to demographic parity (DP) as follows.

\begin{definition}\label{def:2} The \emph{fairness gap}, measured with respect to demographic parity, of a model $h: \mathcal{X}\rightarrow \mathcal{Y}$ on the distribution $\mathcal{D}$, denoted $\Delta_{\text{DP}}(h, \mathcal{D})$, is
$$\max_{z\in \mathcal{Z}} \Big|\Pr_{\mathcal{D}} (h(X)=1|Z=z) - \Pr_{\mathcal{D}} (h(X)=1) \Big|.$$
For $\delta \in [0, 1]$, the model $h$ is \emph{$\delta$-fair w.r.t. DP} on $\mathcal{D}$ if $\Delta_{\text{DP}}(h, \mathcal{D}) \le \delta$.
The model $h$ is \emph{perfectly fair w.r.t. DP} on $\mathcal{D}$ if it is $0$-fair.
We similarly define the fairness gap, $\delta$-fairness, and perfect fairness of $h$ on the training set $D$ by using the empirical probability $\Pr_D(\cdot)$ over $D$.
\end{definition}

We now formulate our problem with the fairness gap with respect to DP. The learner solves the following constrained optimization problem:
\begin{equation}
  \min_h \{R_\ell(h; \mathcal{D}) \colon\ h\in \mathcal{H}, h \text{~is perfectly fair w.r.t. DP on~}\mathcal{D}\}.
\end{equation}
The attacker solves the following bilevel optimization problem:
\begin{align}
\min_{\mathcal{D}'} \big\{d_{\text{TV}}(\mathcal{D}, \mathcal{D}') \colon\ \mathcal{D}'\in \mathcal{S}, \mathcal{A}_{0}(\mathcal{D}')= \{h_{\text{target}}\} \big\}
\end{align}
where $\mathcal{A}_{\delta}(\mathcal{D})$ is the set of solutions of $\min_h\{R_{\ell}(h;\mathcal{D}) \colon\ h\in \mathcal{H},  h \text{~is $\delta$-fair w.r.t. DP on $\mathcal{D}$}\}$, and the search space is defined as per \eqref{eq:2}.
Observe that the only difference between Def.~\ref{def:1} and Def.~\ref{def:2} is that the probability is not conditioned on $Y=1$ in Def.~\ref{def:2}.
Thus all the arguments made in Thm.~\ref{thm:1} still hold with proper adjustments.
In specific, we let $p_h={\textstyle\Pr_{\mathcal{D}}}(h(X)=0, Z=0), q_h= {\textstyle\Pr_{\mathcal{D}}}(h(X)=1, Z=0), r_h={\textstyle\Pr_{\mathcal{D}}}(h(X)=0, Z=1), s_h= {\textstyle\Pr_{\mathcal{D}}}(h(X)=1, Z=1), \tilde{p_h}={\textstyle\Pr_{\widetilde{\mathcal{D}}}}(h(X)=0, Z=0), \tilde{q_h}= {\textstyle\Pr_{\widetilde{\mathcal{D}}}}(h(X)=1, Z=0), \tilde{r_h}={\textstyle\Pr_{\widetilde{\mathcal{D}}}}(h(X)=0, Z=1), \tilde{s_h}= {\textstyle\Pr_{\widetilde{\mathcal{D}}}}(h(X)=1, Z=1)$.
Then we get the following theorem.

\begin{theorem}\label{thm:2} Let $h$ be any model in the hypothesis class $\mathcal{H}$.
Then, $C(h, \mathcal{D}) \le d^\star_{\text{TV}}(h) \le {\textstyle\inf_{\widetilde{\mathcal{D}}\in \Lambda_{1}(h)}} (d_{\text{TV}}(\mathcal{D}, \widetilde{\mathcal{D}}) + C(h,  \widetilde{\mathcal{D}}))$ where $C(h, \mathcal{D}) = \frac{|p_hs_h-q_hr_h|}{\max\{p_h+r_h, q_h+s_h\}}$ and $C(h,  \widetilde{\mathcal{D}}) = \frac{|\tilde{p_h}\tilde{s_h}-\tilde{q_h}\tilde{r_h}|}{\max\{\tilde{p_h}+\tilde{r_h}, \tilde{q_h}+\tilde{s_h}\}}$.
\end{theorem}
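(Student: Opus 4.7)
The proof mirrors the structure of Theorem~\ref{thm:1}: I would prove the DP-analogues of Lemma~\ref{lem:1}, Lemma~\ref{lem:2}, and Proposition~\ref{prop:1} with the redefined $p_h, q_h, r_h, s_h$ (probabilities in which $Y$ is summed out), and then combine them exactly as in Sections~\ref{sec:4:a}--\ref{sec:4:b} via the triangle inequality $d_{\text{TV}}(\mathcal{D}, \text{Fair}_h(\widetilde{\mathcal{D}})) \le d_{\text{TV}}(\mathcal{D}, \widetilde{\mathcal{D}}) + d_{\text{TV}}(\widetilde{\mathcal{D}}, \text{Fair}_h(\widetilde{\mathcal{D}}))$, infimized over $\widetilde{\mathcal{D}}\in\Lambda_{1}(h)$.

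\textbf{Lower bound.} Let $\mathcal{D}' \in \mathcal{S}$ with $h$ perfectly DP-fair on $\mathcal{D}'$. Define $\alpha_{a,b,c}$ as in Lemma~\ref{lem:1} and set $\gamma_{a,c} := \alpha_{a,0,c} + \alpha_{a,1,c}$. Starting from the bound~\eqref{lowerbddTV} and applying the triangle inequality once more,
\[
d_{\text{TV}}(\mathcal{D}, \mathcal{D}') \;\ge\; \tfrac{1}{2}\sum_{(a,b,c)}|\alpha_{a,b,c}| \;\ge\; \tfrac{1}{2}\sum_{(a,c)}|\gamma_{a,c}|.
\]
A simplification relative to the EO proof is that the $\beta$-change of variables is not needed: the DP constraint depends on $Y$ only through the condition $\mathcal{D}_X = \mathcal{D}'_X$, which directly yields $\gamma_{a,0}+\gamma_{a,1}=0$ for $a\in\{0,1\}$. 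Combined with the DP-fairness equation on $\mathcal{D}'$, this produces an affine relation $\gamma_{0,0} = \tfrac{p_h+r_h}{q_h+s_h}\gamma_{1,0} + \tfrac{q_hr_h-p_hs_h}{q_h+s_h}$ of exactly the same shape as the one appearing in Lemma~\ref{lem:1}. The identical four-case piecewise-linear minimization then gives $\tfrac{1}{2}\sum|\gamma_{a,c}| \ge C(h,\mathcal{D})$.

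\textbf{Upper bound.} I would define $\text{Fair}_h(\mathcal{D})$ by the same four-case construction as in Section~\ref{sec:4:a}, with two adjustments: each indicator $\mathbb{1}(h(x)=1, y=1)$ becomes $\mathbb{1}(h(x)=1)$ (analogously for the $h(x)\neq 1$ cases), and the kernel $f(x,1,z)$ becomes $f(x,y,z)$. For instance, in Case~1,
\[
f_h(x,y,z) \,:=\, f(x,y,z) + \mathbb{1}(h(x)=1)(2z-1)\cdot \tfrac{q_hr_h-p_hs_h}{(p_h+r_h)q_h}\, f(x,y,0).
\]
A direct calculation gives the DP-analogue of Lemma~\ref{lem:2}: $\text{Fair}_h(\mathcal{D})\in\mathcal{S}$, $h$ is perfectly DP-fair on $\text{Fair}_h(\mathcal{D})$, and $d_{\text{TV}}(\mathcal{D},\text{Fair}_h(\mathcal{D}))=C(h,\mathcal{D})$. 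Because the construction merely redistributes $Z$-values (scaled uniformly in $y$), the $(X,Y)$-joint marginal is preserved, and therefore the risk of every $g\in\mathcal{H}$ is invariant under $\mathcal{D}\mapsto\text{Fair}_h(\mathcal{D})$. This invariance is exactly what makes the Proposition~\ref{prop:1} argument transfer: starting from any $\widetilde{\mathcal{D}}\in\Lambda_{1}(h)$, the second stage preserves the ``unique unconstrained risk minimizer'' property while enforcing perfect DP-fairness, so $\text{Fair}_h(\widetilde{\mathcal{D}})\in\Lambda_{0}(h)$. Combining the two lemmas with the triangle inequality and taking the infimum over $\widetilde{\mathcal{D}}\in\Lambda_{1}(h)$ yields the upper bound.

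\textbf{Main obstacle.} The heart of the remaining work is bookkeeping rather than a new idea: verifying in each of the four cases that the coefficient multiplying $f(x,y,0)$ (or $f(x,y,1)$) lies in $[0,1]$ under the case hypotheses, so $f_h\ge 0$, and checking absolute continuity of $\text{Fair}_h(\mathcal{D})_{X|Y=y,Z=z}$ with respect to Lebesgue measure for each $(y,z)\in \mathcal{Y}\times\mathcal{Z}$. A conceptual point rather than a calculation is that Proposition~\ref{prop:1}(i) relied implicitly on there being infinitely many perfectly fair classifiers (Lemma~\ref{lem:3}); for DP I would invoke the DP-version of Lemma~\ref{lem:3} noted in the remark following it (detailed in Appendix~\ref{sec:C}), ensuring the two-stage strategy remains genuinely necessary and well-defined in the DP setting.
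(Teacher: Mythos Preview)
Your proposal is correct and follows the same route as the paper, which proves Theorem~\ref{thm:2} by observing that all arguments of Theorem~\ref{thm:1} carry over once $p_h,q_h,r_h,s_h$ are redefined without conditioning on $Y=1$; the $\text{Fair}_h$ construction you write down is exactly the paper's $\text{DPFair}_h$ in Appendix~\ref{sec:A}. Your observation that collapsing over $Y$ via $\gamma_{a,c}=\alpha_{a,0,c}+\alpha_{a,1,c}$ lets you skip the $\beta$-substitution of Lemma~\ref{lem:1} is a valid streamlining relative to a literal translation of that proof. One small correction: Proposition~\ref{prop:1}(i) does not rely on Lemma~\ref{lem:3}; that lemma is invoked only to show the one-stage attack $\text{Fair}_h(\mathcal{D})$ need not land in $\Lambda_0(h)$, so it \emph{motivates} rather than supports the two-stage construction, and the DP version in Appendix~\ref{sec:C} plays the same purely motivational role.
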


We also show how to construct the distribution $\text{DPFair}_h(\mathcal{D})$ that matches the lower bound on $d^\star_{\text{TV}}(h)$.
Let $f$ be the density function of $\mathcal{D}$.
We construct $\text{DPFair}_h(\mathcal{D})$ with the density function $f_h(x,y,z)$ defined as follows.

\noindent\textbf{Case 1.} $p_h+r_h\ge q_h+s_h$, $\frac{q_h}{p_h}\ge \frac{s_h}{r_h}$: Define

$$f_h(x,y,z):= f(x,y,z) + \mathbb{1}(h(x)=1)\cdot(2z-1)\cdot\frac{q_hr_h-p_hs_h}{(p_h+r_h)q_h} f(x,y,0). $$
\textbf{Case 2.} $p_h+r_h\ge q_h+s_h$, $\frac{q_h}{p_h}<\frac{s_h}{r_h}$: Define 

$$f_h(x,y,z):=f(x,y,z) + \mathbb{1}(h(x)=1)\cdot(1-2z)\cdot \frac{p_hs_h-q_hr_h}{(p_h+r_h)s_h} f(x,y,1).$$
\textbf{Case 3.} $p_h+r_h< q_h+s_h$, $\frac{q_h}{p_h}\ge \frac{s_h}{r_h}$: Define
$$f_h(x,y,z):=f(x,y,z) + \mathbb{1}(h(x) \neq 1)\cdot(1-2z)\cdot \frac{q_hr_h-p_hs_h}{(q_h+s_h)r_h} f(x,y,1).$$
\textbf{Case 4.} $p_h+r_h< q_h+s_h$, $\frac{q_h}{p_h}< \frac{s_h}{r_h}$: Define
$$f_h(x,y,z):=f(x,y,z) + \mathbb{1}(h(x) \neq 1)\cdot(2z-1)\cdot \frac{p_hs_h-q_hr_h}{(q_h+s_h)p_h} f(x,y,0).$$

\section{Connection with Theorem 1 in~\cite{Wang2020Robust}}\label{sec:B}
We continue from Remark~\ref{rmk:1}.
Let $\mathcal{X}= \mathbb{R}^n$, $\mathcal{Y}=\{0,1\}$, $\mathcal{Z}=\{0,1\}$.
Let $\mathcal{D}$ be a probability distribution over $\mathcal{X}\times\mathcal{Y}\times\mathcal{Z}$ with the density function $f(x,y,z)$.
We consider a noisy distribution $\mathcal{D}'$ with the density function $f'(x,y,z)$.
In~\cite{Wang2020Robust}, Wang et al. assume that $\mathcal{D}'_{(X,Y)}$ is equal to $\mathcal{D}_{(X,Y)}$, \emph{i.e.,} $\sum_{z\in \{0,1\}} f(x,y,z) = \sum_{z\in \{0,1\}} f'(x,y,z)$.
Then Theorem 1 in~\cite{Wang2020Robust} implies that if a model $h$ is perfectly fair w.r.t. DP on the distribution $\mathcal{D}'$, then $d_{\text{TV}}(\mathcal{D}_{Z=z}, \mathcal{D}'_{Z=z}) \ge \left|\Pr_{\mathcal{D}}(h(X)=1| Z=z) - \Pr_{\mathcal{D}}(h(X)=1) \right|$ for each $z\in\{0,1\}$.
However, they did not provide an explicit construction of $\mathcal{D}'$ that matches the bound.
We show that our construction scheme $\text{DPFair}_h(\mathcal{D})$ defined in Appendix~\ref{sec:A} matches the lower bound for certain cases.

Let $p_h={\textstyle\Pr_{\mathcal{D}}}(h(X)=0, Z=0), q_h= {\textstyle\Pr_{\mathcal{D}}}(h(X)=1, Z=0), r_h={\textstyle\Pr_{\mathcal{D}}}(h(X)=0, Z=1), s_h= {\textstyle\Pr_{\mathcal{D}}}(h(X)=1, Z=1)$, $A=\{x\in \mathbb{R}^n\colon\ h(x)=1\}$.
If we consider the case where $p_h+r_h\ge q_h+s_h$ and $\frac{q_h}{p_h}\ge \frac{s_h}{r_h}$, the density function $f_h(x,y,z)$ of $\text{DPFair}_h(\mathcal{D})$ can be computed as follows: $f(x,y,z) - \frac{q_hr_h-p_hs_h}{(p_h+r_h)q_h} f(x,y,0)$ if $h(x)=1, z=0$; $f(x,y,z) + \frac{q_hr_h-p_hs_h}{(p_h+r_h)q_h} f(x,y,0)$ if $h(x)=1, z=1$; $f(x,y,z)$ otherwise.

A direct calculation yields $\Pr_{\text{DPFair}_h(\mathcal{D})}(h(X)=0, Z=0) = \sum_{y\in \{0,1\}} \int_{A^c} f_h(x,y,0) \diff{\mu} = \sum_{y\in \{0,1\}} \int_{A^c} f(x,y,0) \diff{\mu} =p_h$ and $\Pr_{\text{DPFair}_h(\mathcal{D})}(h(X)=1, Z=0) = \sum_{y\in \{0,1\}} \int_A f_h(x,y,0) \diff{\mu} = \frac{p_hq_h+p_hs_h}{(p_h+r_h)q_h} \sum_{y\in \{0,1\}} \int_A f(x,y,0) \diff{\mu} = \frac{p_hq_h+p_hs_h}{p_h+r_h}$.
Similarly, we get $\Pr_{\text{DPFair}_h(\mathcal{D})}(h(X)=0, Z=1) = r_h$ and $\Pr_{\text{DPFair}_h(\mathcal{D})}(h(X)=1, Z=1) = \frac{r_hq_h+r_hs_h}{p_h+r_h}$.
Since $p_h+q_h+r_h+s_h=1$, one can get $\Pr_{\text{DPFair}_h(\mathcal{D})}(Z=0) =  \frac{p_h}{p_h+r_h}$ and $\Pr_{\text{DPFair}_h(\mathcal{D})}(Z=1) =  \frac{r_h}{p_h+r_h}$.

Then $\mathcal{D}_{Z=0}$ and $\text{DPFair}_h(\mathcal{D})_{Z=0}$ have the following density functions
$$ f(x,y|z=0) = \begin{cases} \frac{1}{p_h+q_h} f(x,y,0) ~~\text{if}~~h(x)=1\\
\frac{1}{p_h+q_h} f(x,y,0)~~\text{if}~~h(x) \neq 1
\end{cases}$$
and
$$f_h(x,y|z=0) = \begin{cases} \frac{q_h+s_h}{q_h} f(x,y,0) ~~\text{if}~~h(x)=1\\
\frac{p_h+r_h}{p_h} f(x,y,0)~~\text{if}~~h(x) \neq 1
\end{cases},$$
respectively.
One can check the following.
\begingroup
\allowdisplaybreaks
\begin{align*}
    &d_{\text{TV}}(\mathcal{D}_{Z=0}, \text{DPFair}_h(\mathcal{D})_{Z=0})\\
    &= \frac{1}{2} \sum_{y\in \{0,1\}} \int_{\mathbb{R}^n}\left|f(x,y|z=0)-f_h(x,y|z=0)\right| \diff\mu\\
    &= \frac{1}{2} \sum_{y\in \{0,1\}} \int_{A}\left|f(x,y|z=0)-f_h(x,y|z=0)\right| \diff\mu + \frac{1}{2} \sum_{y\in \{0,1\}} \int_{A^c}\left|f(x,y|z=0)-f_h(x,y|z=0)\right| \diff\mu\\
    &= \frac{1}{2} \sum_{y\in \{0,1\}} \int_{A}\left|\frac{1}{p_h+q_h} f(x,y,0)-\frac{q_h+s_h}{q_h} f(x,y,0)\right| \diff\mu + \frac{1}{2} \sum_{y\in \{0,1\}} \int_{A^c}\left|\frac{1}{p_h+q_h} f(x,y,0)-\frac{p_h+r_h}{p_h} f(x,y,0)\right| \diff\mu\\
    &= \frac{1}{2} \left|\frac{1}{p_h+q_h} -\frac{q_h+s_h}{q_h}\right| \sum_{y\in \{0,1\}} \int_{A} f(x,y,0) \diff\mu + \frac{1}{2} \left|\frac{1}{p_h+q_h}-\frac{p_h+r_h}{p_h}\right|\sum_{y\in \{0,1\}} \int_{A^c} f(x,y,0) \diff\mu\\
    &= \frac{1}{2} \left|\frac{p_h+q_h+r_h+s_h}{p_h+q_h} -\frac{q_h+s_h}{q_h}\right| q_h + \frac{1}{2} \left|\frac{p_h+q_h+r_h+s_h}{p_h+q_h}-\frac{p_h+r_h}{p_h}\right|p_h\\
    &=\frac{q_hr_h-p_hs_h}{p_h+q_h} = \left|\frac{q_h}{p_h+q_h} - \frac{q_h+s_h}{p_h+q_h+r_h+s_h} \right| \\
    &= \left|\Pr_{\mathcal{D}}(h(X)=1| Z=0) - \Pr_{\mathcal{D}}(h(X)=1) \right|.
\end{align*}
\endgroup
Hence $\text{DPFair}_h(\mathcal{D})$ matches the lower bound for $z=0$. Similarly, $\mathcal{D}_{Z=1}$ and $\text{DPFair}_h(\mathcal{D})_{Z=1}$ have the following joint density functions
$$ f(x,y|z=1) = \begin{cases} \frac{1}{r_h+s_h} f(x,y,1) ~~\text{if}~~h(x)=1\\
\frac{1}{r_h+s_h} f(x,y,1)~~\text{if}~~h(x) \neq 1
\end{cases}$$
and
\begin{align*}
&f_h(x,y|z=1)\\
&= \begin{cases} \frac{p_h+r_h}{r_h} f(x,y,1) + \frac{q_hr_h-p_hs_h}{q_hr_h} f(x,y,0) ~~\text{if}~~h(x)=1\\
\frac{p_h+r_h}{r_h} f(x,y,1)~~\text{if}~~h(x) \neq 1
\end{cases},
\end{align*}
respectively.
One can check the following.
\begingroup
\allowdisplaybreaks
\begin{align*}
    &d_{\text{TV}}(\mathcal{D}_{Z=1}, \text{DPFair}_h(\mathcal{D})_{Z=1})\\
    &= \frac{1}{2} \sum_{y\in \{0,1\}} \int_{\mathbb{R}^n}\left|f(x,y|z=1)-f_h(x,y|z=1)\right| \diff\mu\\
    &= \frac{1}{2} \sum_{y\in \{0,1\}} \int_{A}\left|f(x,y|z=1)-f_h(x,y|z=1)\right| \diff\mu + \frac{1}{2} \sum_{y\in \{0,1\}} \int_{A^c}\left|f(x,y|z=1)-f_h(x,y|z=1)\right| \diff\mu\\
    &\overset{(a)}{\ge}  \frac{1}{2} \sum_{y\in \{0,1\}} \left|\int_{A}f(x,y|z=1)-f_h(x,y|z=1) \diff\mu \right| + \frac{1}{2} \sum_{y\in \{0,1\}} \int_{A^c}\left|f(x,y|z=1)-f_h(x,y|z=1)\right| \diff\mu\\
    &=\frac{q_hr_h-p_hs_h}{r_h+s_h} = \left|\frac{s_h}{r_h+s_h} - \frac{q_h+s_h}{p_h+q_h+r_h+s_h} \right| \\
    &= \left|\Pr_{\mathcal{D}}(h(X)=1| Z=1) - \Pr_{\mathcal{D}}(h(X)=1) \right|,
\end{align*}
\endgroup
where (a) comes from the triangle inequality.
Hence $\text{DPFair}_h(\mathcal{D})$ matches the lower bound for $z=1$ if (a) is the equality.
The equality condition of (a) depends on the behavior of the density function $f(x,y,z)$ for $x\in A$.

\section{Generalization of Lemma~\ref{lem:3}}\label{sec:C}

We now extend Lem.~\ref{lem:3} to other fairness criteria such as demographic parity~\cite{Feldman_2015} and equalized odds~\cite{Hardt_2016}.

\begin{definition}\label{def:3}
A model $h: \mathcal{X}\rightarrow \mathcal{Y}$ satisfies \emph{demographic parity} on $\mathcal{D}$ if, for all $z\in \mathcal{Z}$,
$$\Pr_{\mathcal{D}} (h(X)=1|Z=z)=\Pr_{\mathcal{D}} (h(X)=1).$$
A model $h: \mathcal{X}\rightarrow \mathcal{Y}$ satisfies \emph{equal opportunity} on $\mathcal{D}$ if, for all $z\in \mathcal{Z}$,
$$\Pr_{\mathcal{D}} (h(X)=1|Y=1, Z=z)=\Pr_{\mathcal{D}} (h(X)=1|Y=1).$$
A model $h: \mathcal{X}\rightarrow \mathcal{Y}$ satisfies \emph{equalized odds} on $\mathcal{D}$ if, for all $y\in\mathcal{Y}, z\in \mathcal{Z}$,
$$\Pr_{\mathcal{D}} (h(X)=1|Y=y, Z=z)=\Pr_{\mathcal{D}} (h(X)=1|Y=y).$$
\end{definition}

The following lemma shows the existence of infinitely many linear classifiers satisfying fairness criteria defined in Def.~\ref{def:3}.

\begin{lemma} Let $\mathcal{X}= \mathbb{R}^n$, $\mathcal{Y}=\{0,1\}$, $\mathcal{Z}=\{0,1, \dots, d-1\}$.
Let $\mathcal{D}$ be a probability distribution over $\mathcal{X}\times\mathcal{Y}\times\mathcal{Z}$ whose conditional distribution $\mathcal{D}_{X|Y=y,Z=z}$ has a density function with respect to the Lebesgue measure $\mu$ for each $(y,z)\in \mathcal{Y}\times \mathcal{Z}$.
The following hold.
(i) If $n \geq d+\mathbb{1}(d\geq 3)$, then there exist infinitely many linear classifiers that satisfy demographic parity on $\mathcal{D}$.
Among such linear classifiers, for any $x\in \mathcal{X}$, there exist at least one linear classifier whose decision boundary passes through $x$.
(ii) Exactly the same statement holds for equal opportunity.
(iii) If $n \geq 2d+1$, then there exist infinitely many linear classifiers that satisfy equalized odds on $\mathcal{D}$.
Among such linear classifiers, for any $x\in \mathcal{X}$, there exist at least one linear classifier whose decision boundary passes through $x$.
\end{lemma}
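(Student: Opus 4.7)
The plan is to adapt the proof of Lem.~\ref{lem:3} to each of the three fairness criteria in turn, reusing the same two-regime architecture (intermediate value theorem for $d=2$, Borsuk--Ulam for $d\geq 3$) and the same inductive construction that upgrades ``one perfectly fair classifier through each prescribed $x$'' into ``infinitely many perfectly fair classifiers.'' Part~(ii) is literally the statement of Lem.~\ref{lem:3}, so there is nothing new to prove there; I would just cite it.

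For part~(i) (demographic parity), I would run the proof of Lem.~\ref{lem:3} verbatim after replacing every conditional probability $\Pr_{\mathcal{D}}(h(X)=1 \mid Y=1, Z=z)$ by $\Pr_{\mathcal{D}}(h(X)=1 \mid Z=z)$. The only thing to check is that the new functions $F_z(\omega) = \Pr_{\mathcal{D}}(h_{\omega,x_0}(X)=1 \mid Z=z)$ are still continuous in $\omega$; this follows because $\mathcal{D}_{X\mid Z=z}$ inherits a Lebesgue density from the identity $f_{X\mid Z=z}(x\mid z) = \sum_{y\in\{0,1\}} f_{X\mid Y=y,Z=z}(x\mid y,z)\,\Pr(Y=y\mid Z=z)$. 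The antipodal identity $F_z(\omega)+F_z(-\omega)=1$ carries through unchanged, so IVT (for $d=2$) and Borsuk--Ulam (for $d\geq 3$) apply exactly as before, and the countable-union-of-hyperplanes-has-measure-zero induction that produces infinitely many distinct classifiers transfers without modification.

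For part~(iii) (equalized odds), the substantive change is that $2d$ conditional probabilities must be equalized rather than $d$. For a fixed $x_0\in\mathbb{R}^n$, I would parameterize the candidate linear classifier $h_{\omega,x_0}$ by $\omega \in S^{2d}\subset\mathbb{R}^{2d+1}$ through a natural embedding $\iota:S^{2d}\hookrightarrow\mathbb{R}^n$, which is precisely where the hypothesis $n\geq 2d+1$ is used, and define $g:S^{2d}\to\mathbb{R}^{2d}$ by
\[
g(\omega) = \bigl(F_{0,0}(\omega),\ldots,F_{0,d-1}(\omega),\,F_{1,0}(\omega),\ldots,F_{1,d-1}(\omega)\bigr),
\]
where $F_{y,z}(\omega) = \Pr_{\mathcal{D}}(h_{\omega,x_0}(X)=1\mid Y=y, Z=z)$. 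Continuity of $g$ again follows from the density assumption, and the antipodal identity $F_{y,z}(\omega)+F_{y,z}(-\omega)=1$ yields $g(\omega)+g(-\omega)=(1,\ldots,1)$. Applying Borsuk--Ulam (Lem.~\ref{lem:4}) produces $\omega_{x_0}\in S^{2d}$ with $g(\omega_{x_0})=g(-\omega_{x_0})=(\tfrac{1}{2},\ldots,\tfrac{1}{2})$; in particular $F_{y,z}(\omega_{x_0})$ is constant in $z$ for each fixed $y$, which is precisely equalized odds. Promoting this one classifier to a countable family, and placing the decision boundary through any prescribed $x\in\mathbb{R}^n$, proceeds by the same inductive hyperplane-avoidance argument used at the end of the $d\geq 3$ case of Lem.~\ref{lem:3}.

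The main obstacle is essentially bookkeeping rather than a deep difficulty: one has to correctly count how many scalar conditional probabilities must be equalized ($d$ for demographic parity, $d$ for equal opportunity, $2d$ for equalized odds) and match that count to the codomain dimension in Borsuk--Ulam, which in turn dictates the ambient-dimension bound on $n$. A minor subtlety worth flagging is that the equalized-odds construction is strictly stronger than needed, since it forces all $2d$ conditional probabilities to equal $\tfrac{1}{2}$ rather than merely collapsing each ``$Y=y$'' block; this is why the sufficient bound $n\geq 2d+1$ produced by this route is almost certainly not tight, but it suffices for the stated claim.
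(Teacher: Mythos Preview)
Your proposal is correct and matches the paper's own proof essentially line for line: (ii) is cited as Lem.~\ref{lem:3}, (i) is obtained by replacing $\Pr(h_{\omega,x_0}(X)=1\mid Y=1,Z=i)$ with $\Pr(h_{\omega,x_0}(X)=1\mid Z=i)$ and rerunning the IVT/Borsuk--Ulam argument, and (iii) applies Borsuk--Ulam on $S^{2d}$ to the $2d$-component map $g$ built from the conditional probabilities over both $Y$ values, followed by the same inductive hyperplane-avoidance step. Your additional remarks on why $\mathcal{D}_{X\mid Z=z}$ inherits a density and on the non-tightness of the $n\geq 2d+1$ bound are correct extras not spelled out in the paper.
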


\begin{proof}
(ii) is the result of Lem.~\ref{lem:3}.
Moreover, (i) can be handled with similar arguments made in Lem.~\ref{lem:3}.
Specifically, in Case 2 of the proof for Lem.~\ref{lem:3}, one can consider $F_i(\omega):=\Pr(h_{\omega, x_0}(X)=1| Z=i)$ instead of $\Pr(h_{\omega, x_0}(X)=1|Y=1, Z=i)$.
With this modification, one can easily get the desired result.

We now prove (iii). Suppose $n\ge 2d+1$.
Let $S^{2d}=\{x\in \mathbb{R}^{2d+1}:\|x\|=1\}$.
Define the natural embedding $\iota: S^{2d} \rightarrow \mathbb{R}^n$ by $\iota\big((x_1, \dots, x_{2d+1})\big)=(x_1, \dots, x_{2d+1}, 0, \dots, 0)$.
For any $x_0\in \mathbb{R}^n$, consider the following linear classifiers parametrized by $\omega \in S^{2d}$; $$h_{\omega, x_0}(x):=\begin{cases} 1 \text{~~if~~} (x-x_0)^T\cdot \iota (\omega) \ge 0\\
0 \text{~~o.w.}\end{cases}.$$
Let $F_i(\omega):=\Pr(h_{\omega, x_0}(X)=1|Y=1, Z=i)$ and $G_i(\omega):=\Pr(h_{\omega, x_0}(X)=1|Y=0, Z=i)$ for $0\le i \le d-1$.
Define $g: S^{2d} \rightarrow \mathbb{R}^{2d}$ by $g(w)= \big(F_0(\omega), F_1(\omega), \dots, F_{d-1}(\omega), G_0(\omega), G_1(\omega), \dots, G_{d-1}(\omega)\big)$.
Since each conditional distribution has a density function with respect to the Lebesgue measure $\mu$, $g$ is continuous.
Hence $g(\omega_{x_0})=g(-\omega_{x_0})$ for some $\omega_{x_0}\in S^{2d}$ by the Borsuk-Ulam theorem. By construction, $g(\omega)+g(-\omega)=(1,\dots, 1)$ for all $\omega\in S^{2d}$.
Therefore, $g(\omega_{x_0})=(\frac{1}{2}, \dots, \frac{1}{2})$, which means $h_{\omega_{x_0}}$ is perfectly fair on $\mathcal{D}$.
We just showed that, for any $x\in \mathbb{R}^n$, one can find a perfectly fair (w.r.t. equalized odds) linear classifier $h_{\omega_x, x}$ whose decision boundary passes $x$.
Then one can find the countable set $\{h_{\omega_{x_i}, x_i}\}_{i=1}^{\infty}$ whose elements satisfy equalized odds on $\mathcal{D}$, by using the similar inductive argument made in Lem.~\ref{lem:3}.
\end{proof}

\end{document}